\documentclass[letterpaper]{article} 
\usepackage{aaai2027}  
\usepackage[hyphens]{url}  
\usepackage{graphicx} 
\usepackage{natbib}  
\usepackage{caption} 
\usepackage{algorithm}
\usepackage{algorithmic}

\usepackage{xcolor}
\usepackage{listings}
\usepackage{enumitem}
\usepackage{amssymb}
\usepackage{graphicx} 
\usepackage{booktabs}  
\usepackage{multirow}

\usepackage{amsmath, amssymb, amsthm}
\usepackage{bm}

\usepackage{amsthm}
\newtheorem{theorem}{Theorem}[section]
\newtheorem{definition}{Definition}[section]

\newtheorem{remark}{Remark}[section]

\usepackage{newfloat}
\usepackage{listings}
\DeclareCaptionStyle{ruled}{labelfont=normalfont,labelsep=colon,strut=off} 
\floatstyle{ruled}
\newfloat{listing}{tb}{lst}{}
\floatname{listing}{Listing}

\usepackage{booktabs}

\title{
Tropical Algebraic Geometry for Neuronal Representations: An Arakelov-Green Measure Based Descriptor for Graph Learning

}
\author{
    Yuyang Zhang\textsuperscript{\rm 1},
    Weihan Xu\textsuperscript{\rm 1},
    Xuehai Zhou\textsuperscript{\rm 1},
    Shucheng Cao\textsuperscript{\rm 1},
    Qihuang Zhang\textsuperscript{\rm 1}\thanks{Corresponding author}.
}
\affiliations{
    \textsuperscript{\rm 1}McGill University\\

    yuyang.zhang@mail.mcgill.ca,
    weihan.xu@mail.mcgill.ca,
    xuehai.zhou@mail.mcgill,ca,
    shucheng.cao@mail.mcgill.ca,
    qihuang.zhang@mcgill.ca
}

\definecolor{jade}{rgb}{0.0, 0.66, 0.42}

\definecolor{qzblue}{rgb}{0.0, 0.33, 0.75}

\begin{document}

\maketitle

\begin{abstract}
The quantitative analysis of 3D neuronal morphologies requires representations that capture both graph topology and 3D spatial geometry. Current message-passing Graph Neural Networks (GNNs) are bounded by the 1-Weisfeiler-Lehman (1-WL) test, limiting their ability to capture cycles induced by spatial proximities. To address this, we propose a training-free geometric prior based on tropical algebraic geometry. Recently, the computational foundations for the tropical Abel-Jacobi transform and tropical polarization distances were established for metric graphs. In this work, we apply these mathematical tools to practical machine learning tasks for tree-structured data. We introduce a structural transformation pipeline, comprising cycle space augmentation and quotient space construction, to convert spatial trees into cyclic metric graphs suitable for embeddings into the Tropical Jacobian. Computing exact tropical polarization distances requires solving the Closest Vector Problem (CVP) on integer lattices, which is NP-Hard. Instead of relying on explicit lattice approximations with quantization errors (e.g., Babai's rounding), we adopt a continuous relaxation on the universal cover of the Albanese torus. We show that the discrete Arakelov-Green measure, computed in closed form via the generalized inverse of the graph Laplacian, decomposes exactly into the intrinsic path metric minus the unquantized polarization distance on that cover, and is therefore obtained without any integer lattice search. This metric yields two descriptors: its eigenvectors provide node-level structural coordinates, and its permutation-invariant eigenvalue spectrum provides a graph-level signature. On the BREC benchmark, the eigenvector formulation demonstrates expressivity beyond the 1-WL limit. On 3D morphology datasets (ACT-4, JML-4, BIL-6), the spectrum integrates into standard architectures (VAEs, GNNs, Tree-LSTMs) without additional trainable parameters. It outperforms explicit lattice approximations and improves classification accuracy compared to existing spatial models.
\end{abstract}

\section{Introduction}

The quantitative analysis of 3D neuronal morphologies requires learning representations from spatial trees, where nodes possess 3D Euclidean coordinates. A primary challenge is constructing representations that capture both the graph topology and the spatial geometry, while maintaining invariance to node permutations.

Standard message-passing Graph Neural Networks (GNNs) are bounded by the 1-Weisfeiler-Lehman (1-WL) graph isomorphism test \cite{xu2018how}. In the context of spatial trees, this limits their ability to capture cycles induced by spatially close branches. Conversely, Topological Data Analysis (TDA) captures global cycles but discards pointwise geometric information when vectorizing discrete persistence diagrams into fixed-length descriptors.

To evaluate spatial trees using tropical algebraic geometry, we introduce a structural transformation pipeline. Recently, Cao and Monod \cite{cao2025computingtropicalabeljacobitransform} formalized the computational foundations for the tropical Abel-Jacobi transform and tropical polarization distances on metric graphs. We are the first to apply this mathematical framework to practical machine learning tasks, specifically neuronal morphology classification. We perform graph augmentation by adding edges between the root node and spatially close leaf nodes (under a distance threshold $\epsilon$). This transforms the input tree into a graph with a non-trivial cycle space (first Betti number $\beta_1 > 0$), enabling the projection of the graph into a flat Riemannian torus, the Tropical Jacobian $\operatorname{Jac}(\Gamma) = \mathbb{R}^g / \Lambda$, via the tropical Abel-Jacobi map. To filter terminal acyclic noise, we apply a quotient space construction prior to distance evaluation.

\looseness=-1 However, utilizing this mathematical framework in large-scale machine learning presents a computational barrier. As proven \cite{cao2025computingtropicalabeljacobitransform}, computing the exact tropical polarization distance on the Tropical Jacobian is mathematically equivalent to the Closest Vector Problem (CVP) on the integer lattice $\Lambda$, which is NP-Hard \cite{vanEmdeBoas1981}. Existing explicit computations rely on lattice basis reduction algorithms, such as Babai's rounding\cite{Babai1986}. These discrete approximations introduce quantization errors that degrade the metric space and impose high computational costs.

\looseness=-1 We address this problem by adopting a continuous relaxation on the universal cover $\mathbb{R}^g$. Utilizing the discrete Hodge orthogonal decomposition, we show that the discrete Arakelov-Green measure, computed in closed form via the Moore-Penrose pseudoinverse ($L^+$) of the discrete Laplace-Beltrami operator, decomposes exactly into the intrinsic path metric minus the unquantized polarization distance on $\mathbb{R}^g$. By relying on this decomposition, our formulation obtains a robust continuous metric without any integer lattice search, eliminating the quantization errors of discrete approximations.

Based on this mechanism, we extract two deterministic, training-free structural descriptors from the Arakelov-Green measure. We summarize our contributions as follows:

\begin{itemize}
    \item \textbf{Application of the Tropical Jacobian to Neuronal Morphologies:} We bridge the mathematical framework of the tropical Abel-Jacobi transform to practical machine learning. We design a graph augmentation and quotient space construction pipeline that converts acyclic spatial trees into cyclic metric graphs, encoding 3D spatial proximities into homology groups while filtering data acquisition artifacts.
    \looseness=-1 \item \textbf{Continuous Relaxation vs. Explicit Lattice Approximation:} We prove that the discrete Arakelov-Green measure decomposes exactly into the intrinsic path metric minus the unquantized polarization distance on the universal cover $\mathbb{R}^g$. This continuous relaxation ($\mathcal{O}(|V|^3)$) is obtained in closed form without any integer lattice search, eliminating the quantization errors associated with explicit lattice rounding algorithms.
    \looseness=-1 \item \textbf{Theoretical Expressivity and Empirical Performance:} We demonstrate that this metric yields two distinct descriptors. On the BREC benchmark \cite{wang2024brec}, its node-level eigenvector formulation separates graph pairs beyond the 1-WL limit. On 3D morphology datasets (ACT-4, JML-4, BIL-6), its graph-level permutation-invariant eigenvalue spectrum integrates into standard architectures (MLPs, GNNs, Tree-LSTMs) without additional trainable parameters and improves classification accuracy.
\end{itemize}

\section{Related Work}
\label{sec:related_work}

\subsection{Morphological Representations in Connectomics}
Neuronal morphologies are conventionally modeled as spatial trees. Deep learning approaches include sequence-based, point-cloud-based, and graph-based models. Generative models like MorphVAE \cite{laturnus2021morphvae} utilize 3D random walks, but stochastic sampling limits global topology extraction. Point-cloud networks such as MorphoGNN \cite{10123059} apply spatial convolutions over downsampled coordinates, often discarding exact path distances. Tree-based models, including Tree-LSTMs \cite{Tai2015} and TreeMoCo \cite{3600270.3602087}, process morphologies as directed acyclic graphs ($\beta_1 = 0$), failing to capture cyclic structures formed by spatially close branches. Our framework addresses this by adding edges based on spatial proximity, encoding 3D geometry into the homology groups of the graph.

\subsection{Graph Neural Network Expressivity and Spectral Encodings}
The discriminative power of standard Message Passing Neural Networks (MPNNs) is bounded by the 1-Weisfeiler-Lehman (1-WL) test \cite{xu2018how}. To improve expressiveness, methods incorporate Positional Encodings (PE) or Structural Encodings (SE) derived from graph Laplacian eigenvectors \cite{dwivedi2020generalization}. However, explicit spectral features suffer from sign and basis ambiguities, requiring specialized canonization \cite{ma2024laplaciancanonizationminimalistapproach}. 

Alternatively, encoding the effective resistance metric improves expressiveness beyond 1-WL \cite{zhang2023rethinking, klein1993resistance}. Our approach aligns with this direction using tropical algebraic geometry. We compute the discrete Arakelov-Green measure via the generalized inverse of the discrete Laplace-Beltrami operator ($L^+$), corresponding to the effective resistance on a quotient space. Extracting the eigenvalue spectrum of this matrix yields a permutation-invariant structural signature, bypassing Laplacian eigenvector ambiguities. As validated on the BREC benchmark \cite{wang2024brec}, this signature demonstrates expressivity beyond standard 1-WL message passing.

\subsection{Topological Data Analysis (TDA)}
Topological Data Analysis (TDA) extracts shape characteristics invariant to continuous deformations. For neurons, the Topological Morphology Descriptor (TMD) \cite{kanari2018topological} tracks connected components across a radial filtration. Integrating these discrete persistence diagrams into neural networks requires transformation into continuous representations via Gaussian kernel density estimation (e.g., Persistence Images \cite{adams2017persistence}). 

This vectorization process alters the original path distances of the graph and summarizes the structure into a 2D image, preventing exact node-level spatial alignment. Our formulation avoids kernel approximations. The generalized inverse $L^+$ induces a valid Euclidean Distance Matrix (EDM). Extracting the eigenvalue spectrum of this matrix directly encodes the graph geometry into a fixed-length topological signature.

\subsection{Tropical Geometry and Metric Graphs}
Tropical algebraic geometry analyzes metric graphs by translating Riemann surfaces into tropical curves. Cao and Monod \cite{cao2025computingtropicalabeljacobitransform} formalize the computation of the tropical Abel-Jacobi transform and polarization distances. They prove that evaluating these distances on the tropical Jacobian requires solving the Closest Vector Problem (CVP) on a lattice, which is NP-Hard. Practical computations thus rely on approximation algorithms like Babai's nearest plane method. 

\looseness=-1 Baker and Faber \cite{baker2010metricpropertiestropicalabeljacobi} relate Jacobian metrics to effective resistance, showing that structural bridges collapse to a point under the tropical Abel-Jacobi map. Building on this, we construct a quotient space to contract microscopic bridges. Instead of approximating the NP-hard CVP on the integer lattice, we evaluate the discrete Arakelov-Green measure. Drawing on the formal distinction between the quotient Albanese torus and its universal cover $\mathbb{R}^g$ \cite{caporaso2013geometry}, we show that this measure decomposes in closed form into the intrinsic path metric minus the unquantized polarization distance on the universal cover. This continuous relaxation requires no integer lattice search and incurs no rounding error.

\section{Methodology}
\label{sec:methodology}

Our objective is to extract a permutation-invariant and $SE(3)$-invariant structural signature from 3D spatial trees. The framework processes the input structure through metric graph transformation, cycle augmentation, and evaluation of the unquantized polarization distance on the universal cover of the Tropical Jacobian. Figure~\ref{fig:transformation_pipeline} illustrates the structural transformation stages, and Algorithm~\ref{alg:ag_feature_extraction} summarises the complete procedure.

\begin{figure}[tb]
    \centering
    \includegraphics[width=\columnwidth]{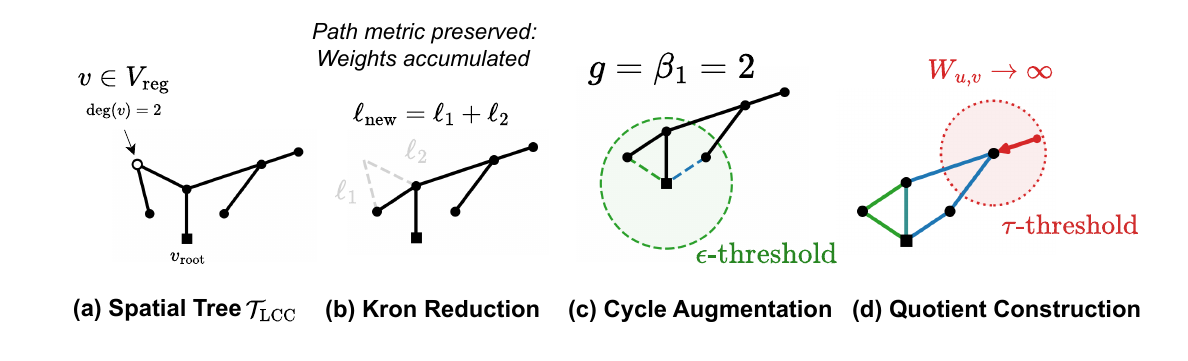}
    \caption{
        \textbf{Overview of the structural transformation pipeline.} 
        \textbf{(a)} The input neuronal morphology is modeled as a spatial tree $\mathcal{T}_{\text{LCC}}$. 
        \textbf{(b)} Kron reduction eliminates valence-2 vertices. The intrinsic path metric is preserved by accumulating the edge lengths ($\ell_{\text{new}} = \ell_1 + \ell_2$), simplifying the graph topology without discarding metric weights. 
        \textbf{(c)} Cycle space augmentation connects spatially proximate leaves to the root within a threshold $\epsilon$. This transforms the tree into a cyclic graph ($\beta_1 > 0$), encoding 3D spatial geometry into fundamental homology groups. 
        \textbf{(d)} Quotient space construction contracts terminal cut-edges below a length threshold $\tau$ by driving the Laplacian affinities to infinity ($W_{u,v} \to \infty$), filtering data acquisition artifacts to construct the final metric graph.
    }
    \label{fig:transformation_pipeline}
\end{figure}

\subsection{Metric Graph Formulation and Topological Retraction}
\label{subsec:retraction}

Let a 3D spatial tree be denoted by $\mathcal{T} = (V, E)$, where $V$ and $E$ are the sets of vertices and edges, respectively. We formulate this structure as an abstract metric graph $\Gamma = (\mathcal{T}, \ell)$. The length function $\ell: E \to \mathbb{R}_{>0}$ is defined as the $L_2$ distance of the Euclidean embedding $\iota: V \to \mathbb{R}^3$. We operate on the largest connected component, $\mathcal{T}_{\text{LCC}}$.

Spatial graphs derived from biological data often contain over-sampled linear segments. To make the representation invariant to sampling density, we apply Kron reduction \cite{Dorfler2013} to eliminate degree-2 vertices.

\begin{theorem}[Metric Preservation via Kron Reduction]
\label{thm:kron}
Let $V_{\text{reg}} = \{v \in V \setminus \{v_{\text{root}}\} \mid \deg(v) = 2\}$ be the set of valence-2 regular vertices. Evaluating the Schur complement of the combinatorial graph Laplacian with respect to $V_{\text{reg}}$ extracts the minimal skeleton $\Gamma_{\min}$. This operation preserves the discrete Arakelov-Green measure between the retained vertices $V_{\text{core}} = V \setminus V_{\text{reg}}$.
\end{theorem}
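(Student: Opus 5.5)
We establish that the Schur complement of the weighted Laplacian with respect to $V_{\text{reg}}$ is again a weighted graph Laplacian on $V_{\text{core}}$, namely that of $\Gamma_{\min}$ with series-combined edge lengths. We then show that the effective resistance on $V_{\text{core}}$, and hence the discrete Arakelov-Green measure, is unchanged. Throughout we use conductances $W_{u,v} = 1/\ell(u,v)$ and write the Laplacian in block form
\[
L = \begin{pmatrix} L_{cc} & L_{cr} \\ L_{rc} & L_{rr} \end{pmatrix},
\]
where $c$ indexes $V_{\text{core}}$ and $r$ indexes $V_{\text{reg}}$. The Kron-reduced matrix is $L_{\text{red}} = L_{cc} - L_{cr} L_{rr}^{-1} L_{rc}$.

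\textbf{Step 1: invertibility and graph structure.} $L_{rr}$ is a principal submatrix of a connected Laplacian that omits at least one vertex (the root is never in $V_{\text{reg}}$). It is therefore a nonsingular M-matrix, so the Schur complement is well defined. Every degree-2 vertex lies on a maximal chain $u=x_0,x_1,\dots,x_k,x_{k+1}=w$ whose interior vertices are in $V_{\text{reg}}$ and whose endpoints $u,w$ are in $V_{\text{core}}$. In a tree the chain cannot close on itself; in general we handle the degenerate loop case separately. Since $L_{rr}$ is block diagonal over the chains, the reduction can be done one chain at a time, or equivalently by eliminating one vertex at a time, using the quotient property of Schur complements. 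Eliminating a single degree-2 vertex $x$ with neighbours $a,b$ and conductances $1/\ell_1,1/\ell_2$ changes only the $\{a,b\}$ block. The added conductance between $a$ and $b$ is $\frac{(1/\ell_1)(1/\ell_2)}{1/\ell_1+1/\ell_2}=1/(\ell_1+\ell_2)$, and the diagonal entries stay consistent. This is exactly the rule $\ell_{\text{new}}=\ell_1+\ell_2$ of Figure~\ref{fig:transformation_pipeline}(b). By induction, $L_{\text{red}}$ is the Laplacian of $\Gamma_{\min}$.

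\textbf{Step 2: preservation of the pseudoinverse on the core block.} Let $\chi=e_u-e_v$ with $u,v\in V_{\text{core}}$. Solve $Lx=\chi$ with $x=(x_c,x_r)$. The second block row gives $x_r=-L_{rr}^{-1}L_{rc}x_c$, because $\chi$ vanishes on $V_{\text{reg}}$. Substituting into the first block row yields $L_{\text{red}}x_c=\chi_c$. Hence
\[
\chi^\top L^{+}\chi=\chi^\top x=\chi_c^\top x_c=\chi_c^\top L_{\text{red}}^{+}\chi_c .
\]
The first equality holds because adding a constant vector to $x$ does not change the pairing, since $\mathbf{1}^\top\chi=0$. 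So $R_\Gamma(u,v)=R_{\Gamma_{\min}}(u,v)$ for all retained pairs. We then invoke the paper's definition of the discrete Arakelov-Green measure as built from $L^+$ through effective resistance. If its definition involves a base measure or centering (for example $g_\mu(x,y)=\int\!\!\int$ of resistance terms), we must check that this measure pushes forward compatibly. Even without that check, the resistance-based decomposition of $g$ shows that any expression depending only on pairwise resistances among $V_{\text{core}}$ is preserved.

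\textbf{Main obstacle.} The hard part is Step 2's final transfer from effective resistance to the Arakelov-Green measure. The raw entries of $L^+$ are not preserved by Kron reduction, because centering over $V$ differs from centering over $V_{\text{core}}$. The statement therefore has to be read in the resistance-normalized form of the measure. Concretely, I would rewrite the measure as $g(u,v)=-\tfrac12 R(u,v)+\phi(u)+\phi(v)$ with $\phi$ determined by the canonical measure. I would then argue that the canonical measure of a chain concentrates on its endpoints in the metric-graph limit. If this fails, I would restrict the claim to pairwise resistances or $g$ modulo additive potentials, which is what the downstream descriptors actually use.
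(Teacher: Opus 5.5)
Your proof is correct and follows the paper's route. The paper also shows that Schur-complement elimination of $V_{\text{reg}}$ preserves $L^+_{xx}+L^+_{yy}-2L^+_{xy}$ on $V_{\text{core}}$, phrased as Dirichlet-energy minimization over the interior potentials rather than your explicit block solve of $Lx=\chi$, and then checks the series rule $1/W'_{uw}=\ell(u,v)+\ell(v,w)$ for a single eliminated vertex; your invertibility of $L_{rr}$ and chain-by-chain quotient-property induction fill in details the paper leaves implicit.

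Your ``main obstacle'' does not arise here. The paper defines the discrete Arakelov-Green measure directly as $M_{AG}(x,y)=L^+_{x,x}+L^+_{y,y}-2L^+_{x,y}=\chi^\top L^+\chi$ with $\chi=e_x-e_y$, so Step~2 already completes the proof and no centering or canonical-measure argument is needed.
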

\looseness=-1 \textit{Proof Sketch.} Since this reduction operates on the initial tree $\mathcal{T}_{\text{LCC}}$ prior to any edge augmentation, the graph contains no cycles. The Schur complement sequentializes the inverse edge weights. This process does not discard metric weights; instead, it accumulates the intrinsic lengths of the eliminated segments into a single consolidated edge ($\ell(u,w) = \ell(u,v) + \ell(v,w)$). Consequently, the updated off-diagonal entries equal the reciprocal of the sum of the original metric lengths, preserving the exact path distances between all remaining core vertices. (See Appendix A.1 for the full proof). \hfill $\square$

This reduction decreases the vertex set size from $|V|$ to $|V_{\text{core}}|$, bounding the subsequent computational complexity to $\mathcal{O}(|V_{\text{core}}|^3)$.

\subsection{Cycle Space Augmentation}
\label{subsec:augmentation}

According to the Euler-Poincaré formula, the minimal skeleton $\Gamma_{\min}$ of a tree has a first Betti number $\beta_1 = 0$. Consequently, its Jacobian variety is zero-dimensional, meaning the graph topology does not encode extrinsic 3D spatial folding.

We introduce cycle space augmentation to encode spatial geometry into the graph topology. Given a distance threshold $\epsilon > 0$, let $V_{\text{core, leaf}}$ denote the leaf vertices of $\Gamma_{\min}$. We define a set of new edges $E_{\epsilon} = \{ (v, v_{\text{root}}) \mid v \in V_{\text{core, leaf}}, \|\iota(v) - \iota(v_{\text{root}})\|_2 < \epsilon \}$ and construct the augmented graph $\Gamma_\epsilon = \Gamma_{\min} \cup E_\epsilon$.

By adding edges between the structural root and spatially close leaves, this operation ensures $\beta_1(\Gamma_\epsilon) > 0$. It maps the 3D proximity of branches into the fundamental homology groups $H_1(\Gamma_\epsilon, \mathbb{Z})$ of the graph, enabling the projection of the graph into a high-dimensional torus.

\subsection{Quotient Space Construction}
\label{subsec:quotient}

The augmented graph $\Gamma_\epsilon$ may still contain short terminal branches caused by data acquisition errors (cut-edges disjoint from $\ker(\partial_1)$). To remove these branches without altering the cycle basis, we contract them using a quotient space.

\begin{definition}[Quotient Metric Space]
Let $E_{\text{bridge}}$ be the set of cut-edges in $\Gamma_\epsilon$. Given a length threshold $\tau > 0$, define $E_{<\tau} = \{ e \in E_{\text{bridge}} \mid \ell(e) < \tau \}$. We define an equivalence relation $\sim_\tau$ on $V_{\text{core}}$ such that $u \sim_\tau v$ if and only if there exists a path between $u$ and $v$ consisting exclusively of edges in $E_{<\tau}$. The quotient space $\tilde{\Gamma}_\epsilon = \Gamma_\epsilon / \sim_\tau$ is obtained by identifying all vertices within each equivalence class into a single vertex.
\end{definition}

\looseness=-1 To implement this operation algebraically, we define a weight matrix $W$. For any edge in $E_{<\tau}$, setting $W_{u,v} \to \infty$ in the discrete Laplace-Beltrami operator forces the resulting metric distance between $u$ and $v$ to zero. In implementation, substituting $\infty$ with a large scalar (e.g., $10^9$) and computing the Moore-Penrose pseudoinverse executes the vertex contraction deterministically.

\begin{figure}[tb]
    \centering
    \includegraphics[width=\columnwidth]{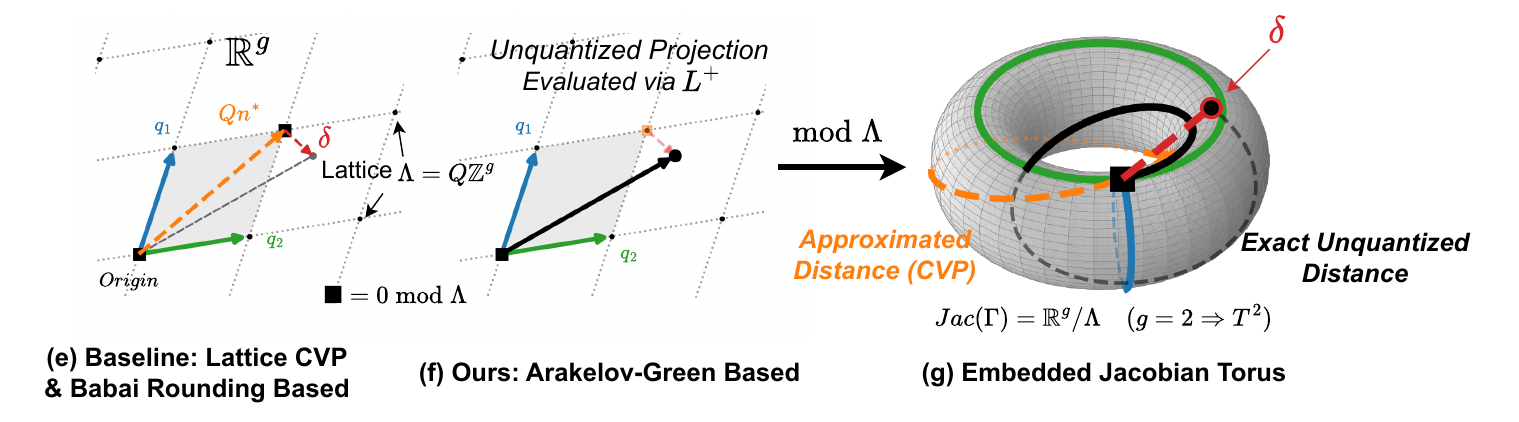}
    \caption{
        \textbf{Distance evaluation on covering spaces.} 
        \textbf{(e)} Explicit lattice approximations compute the tropical polarization distance by solving the Closest Vector Problem (CVP) on the integer lattice $\Lambda$. Babai's rounding maps the continuous target to a discrete lattice point (square), inducing a quantization error $\delta$ (red dashed vector). 
        \textbf{(f)} Our formulation evaluates the exact unquantized minimum on the universal cover $\mathbb{R}^g$ using the Moore-Penrose pseudoinverse of the discrete Laplace-Beltrami operator ($L^+$), without requiring integer lattice searches.
        \textbf{(g)} Projection onto the Riemannian torus $\operatorname{Jac}(\Gamma) = \mathbb{R}^g / \Lambda$. The explicit CVP approximation introduces metric distortion $\delta$, whereas the discrete Arakelov-Green measure evaluates the continuous distance without quantization error.
    }
    \label{fig:distance_evaluation}
\end{figure}

\subsection{Tropical Jacobian and the Role of Polarization Distance}
\label{subsec:polarization}

After augmentation and quotienting, the graph $\tilde{\Gamma}_\epsilon$ contains $g = \beta_1(\tilde{\Gamma}_\epsilon)$ fundamental cycles. Through the Abel-Jacobi map, this graph is projected into a $g$-dimensional torus, the Tropical Jacobian $\operatorname{Jac}(\tilde{\Gamma}_\epsilon) = \mathbb{R}^g / \Lambda$, where $\Lambda$ is the lattice generated by the fundamental cycles.

\looseness=-1 A torus $\mathbb{R}^g / \Lambda$ is a quotient space. To measure the distance between two points on this torus, an inner product on the universal cover $\mathbb{R}^g$ is required. The tropical polarization provides this specific metric tensor, defined by the period matrix $Q \in \mathbb{R}^{g \times g}$ of the graph. Assigning the polarization distance to the torus has an advantage: rather than relying on a single shortest path between two nodes, the polarization distance integrates spatial information across the entire cycle basis. This allows the metric to encode the global spatial configuration of the graph.

\subsection{The Lattice Approximation Baseline and Quantization Error}
\label{sec:tropical_approximation}

Figure~\ref{fig:distance_evaluation} contrasts this explicit lattice construction (panels e and g) with the continuous evaluation developed in the following subsection (panel f). Computing the exact polarized distance between two points $x$ and $y$ on the discrete torus $\mathbb{R}^g / \Lambda$ requires finding the optimal lattice translation $n \in \mathbb{Z}^g$ that minimizes the distance. This is mathematically equivalent to the Closest Vector Problem (CVP) on the lattice $\Lambda$:
{\small
\begin{equation}
d_{Trop}(x,y) = \min_{n \in \mathbb{Z}^g} \left( (x - y - Qn)^\top Q^{-1} (x - y - Qn) \right)^{\frac{1}{2}}
\end{equation}
}

Solving the CVP is NP-Hard. Practical implementations must rely on polynomial-time approximations, such as Babai's rounding algorithm. Let $t$ denote the continuous coordinate difference before applying the lattice constraints. Babai's algorithm maps $t$ to a discrete lattice point by rounding: $\hat{n} = \lfloor Q^{-1}t \rceil$.

\looseness=-1 This approximation introduces a quantization error vector $\delta = Q^{-1}t - \hat{n}$. Consequently, the approximated squared distance deviates from the true minimum by a discrepancy proportional to the quadratic form $\delta^T Q \delta$. As demonstrated in our experiments (Section \ref{subsec:explicit_vs_implicit}), this quantization error distorts the structural representation, leading to performance degradation in graph classification tasks. This motivates evaluating the continuous relaxation on the universal cover instead, which requires no integer search and, as Table \ref{tab:cvp_vs_ag_selected} shows, also yields a more discriminative descriptor.

\subsection{Closed-Form Unquantized Evaluation on the Universal Cover}
\label{subsec:ag_measure}

We compute pairwise distances on the quotient space $\tilde{\Gamma}_\epsilon$ using discrete potential theory. Let $L = \text{diag}(W\mathbf{1}) - W$ be the graph Laplacian. We compute its Moore-Penrose pseudoinverse $L^+$. The Arakelov-Green measure $M_{AG} \in \mathbb{R}^{|V_{\text{core}}| \times |V_{\text{core}}|}$ is evaluated via the quadratic form:
{\small
\begin{equation}
    M_{AG}(x, y) = L^+_{x,x} + L^+_{y,y} - 2L^+_{x,y}
\end{equation}
}

Evaluating exact distances on the Tropical Jacobian $\operatorname{Jac}(\tilde{\Gamma}_\epsilon) = \mathbb{R}^g / \Lambda$ requires solving the CVP on the integer lattice $\Lambda$, which is NP-Hard \cite{cao2025computingtropicalabeljacobitransform}. Instead of computing this discrete integer projection, we prove that $M_{AG}$ evaluates the continuous minimum of the Dirichlet energy. Mathematically, $M_{AG}$ incorporates the unquantized polarization distance on the universal cover $\mathbb{R}^g$ as an exact subtractive correction to the intrinsic path metric:
{\small
\begin{equation}
    M_{AG}(x,y) = \|p_{xy}\|_{W^{-1}}^2 - (\Delta \Phi_{xy})^\top Q^{-1} (\Delta \Phi_{xy})
\end{equation}
}
where $\|p_{xy}\|_{W^{-1}}^2$ is the base path metric, and the second term is the squared unquantized distance on the universal cover $\mathbb{R}^g$. This continuous formulation naturally encodes the geometric polarization induced by spatial cycles, and is obtained in closed form without any integer lattice search. Section \ref{subsec:explicit_vs_implicit} shows that this continuous relaxation also avoids the quantization error of explicit rounding, yielding consistently higher accuracy than the explicit baseline.

\subsection{Spectral Feature Extraction}
\label{subsec:spectral}

Because the generalized inverse $L^+$ is positive semi-definite, there exists a matrix $B$ such that $L^+ = B B^T$. The quadratic form can be rewritten as $M_{AG}(x, y) = \|b_x - b_y\|_2^2$, showing that $M_{AG}$ is a conditionally negative definite matrix that induces a Euclidean Distance Matrix (EDM).

To derive a fixed-size representation, we compute its eigenvalue spectrum $\Sigma = \{\lambda_1, \lambda_2, \dots, \lambda_{|V_{\text{core}}|}\}$. For any permutation matrix $P$, the permuted matrix $P M_{AG} P^T$ yields the same spectrum. Therefore, we extract the top-$K$ absolute eigenvalues $\mathcal{S}_{\Gamma} = \text{TopK}(\text{SortDescending}(|\Sigma|), K)$ as a permutation-invariant signature. This structural prior is then concatenated with node or graph features in neural architectures.

\begin{algorithm}[tb]
\small
\caption{Spectral Signature via Arakelov-Green Measure}
\label{alg:ag_feature_extraction}
\textbf{Input}: Spatial tree $\mathcal{T}=(V, E, \iota)$, proximity threshold $\epsilon$, contraction threshold $\tau$, feature dimension $K$. \\
\textbf{Output}: Permutation-invariant spectral signature $\mathcal{S}_{\Gamma} \in \mathbb{R}^K$.
\begin{algorithmic}[1]
\STATE \textbf{Step 1: Topological Retraction}
\STATE $\Gamma_{\min} \gets \text{KronReduction}(\mathcal{T}_{\text{LCC}})$

\STATE \textbf{Step 2: Cycle Space Augmentation}
\STATE $E_{\epsilon} \gets \{ (v, v_{\text{root}}) \mid v \in V_{\text{core, leaf}}, \|\iota(v) - \iota(v_{\text{root}})\|_2 < \epsilon \}$
\STATE $\Gamma_{\epsilon} \gets \Gamma_{\min} \cup E_{\epsilon}$

\STATE \textbf{Step 3: Quotient Space Construction}
\STATE $E_{\text{bridge}} \gets \text{CutEdges}(\Gamma_\epsilon)$ 
\STATE Construct weight matrix $W \in \mathbb{R}^{|V_{\text{core}}| \times |V_{\text{core}}|}$ where:
\STATE \quad $W_{u,v} = \begin{cases} 
\infty, & \text{if } e_{u,v} \in E_{\text{bridge}} \text{ and } \ell(e_{u,v}) < \tau \\ 
1/\ell(e_{u,v}), & \text{otherwise} 
\end{cases}$

\STATE \textbf{Step 4: Arakelov-Green Measure Evaluation}
\STATE $L \gets \text{diag}(W \mathbf{1}) - W$
\STATE $L^+ \gets \text{GeneralizedInverse}(L)$
\STATE $M_{AG} \gets \text{diag}(L^+)\mathbf{1}^T + \mathbf{1}\text{diag}(L^+)^T - 2L^+$

\STATE \textbf{Step 5: Spectral Extraction}
\STATE $\Sigma \gets \text{Spectrum}(M_{AG})$
\STATE $\mathcal{S}_{\Gamma} \gets \text{TopK}(\text{SortDescending}(|\Sigma|), K)$
\STATE \textbf{return} $\mathcal{S}_{\Gamma}$
\end{algorithmic}
\end{algorithm}

\section{Experiments}
\label{sec:experiments}

\looseness=-1 We empirically evaluate our framework's expressiveness against the 1-Weisfeiler-Lehman (1-WL) test on graph isomorphism benchmarks, and assess its effectiveness as a structural prior on 3D neuronal morphologies.

\subsection{Theoretical Validation: Graph Isomorphism}

\looseness=-1 Standard message-passing networks are 1-WL bounded, limiting their ability to distinguish certain spatial cycles. We evaluate our metric's expressive power on the BREC dataset \cite{wang2024brec}, comprising 400 symmetric graph pairs.

\begin{table}[tb]
\centering
\small \setlength{\tabcolsep}{3pt}
\begin{tabular}{l ccc c | c}
\toprule
\textbf{Model / Type} & \textbf{Basic} & \textbf{Regular} & \textbf{Extend} & \textbf{CFI} & \textbf{Total} \\
\midrule
\textit{Theoretical} & (60) & (140) & (100) & (100) & (400) \\
3-WL Test            & 100\% & 35.7\% & 100\% & 60.0\% & 67.5\% \\
\midrule
\textit{Transformers} & & & & & \\
Graphormer           & 26.7\% & 8.6\% & 41.0\% & 10.0\% & 19.8\% \\
\midrule
\textit{High-Order GNNs} & & & & & \\
NGNN (Subgraph)      & 98.3\% & 34.3\% & 59.0\% & 0.0\% & 41.5\% \\
PPGN ($k$-WL)        & 100\% & 35.7\% & 100\% & 23.0\% & 58.2\% \\
KP-GNN (Subgraph)    & 100\% & 75.7\% & 98.0\% & 11.0\% & 68.8\% \\
I$^2$-GNN (Subgraph) & 100\% & 71.4\% & 100\% & 21.0\% & 70.2\% \\
\midrule
AG-GIN (Ours) & 100\% & 62.1\% & 100\% & \textbf{33.0\%} & 70.0\% \\
\bottomrule
\end{tabular}
\caption{Pair distinguishing accuracies on the BREC benchmark. The AG-GIN demonstrates expressivity beyond standard 1-WL message passing.}
\label{tab:brec_results}
\end{table}

\looseness=-1 \textbf{Setup \& Analysis:} After computing the discrete Arakelov-Green distance matrix, we extract its absolute eigenvectors to ensure permutation equivariance and resolve sign ambiguity. These structural coordinates are concatenated with standard node features and processed via a Graph Isomorphism Network augmented with Jumping Knowledge (AG-GIN). As detailed in Table \ref{tab:brec_results}, AG-GIN separates $70.0\%$ of total pairs, achieving $100\%$ on \textit{Basic} and \textit{Extension} categories. It yields $33.0\%$ on the challenging CFI category, outperforming higher-order/subgraph GNNs like I$^2$-GNN ($21.0\%$) and PPGN ($23.0\%$). This confirms our metric maps non-isomorphic cyclic structures to distinct representations. Furthermore, unlike subgraph GNNs requiring $\mathcal{O}(|V|^3)$ enumerations per training step, our operator evaluates the metric algebraically in a single preprocessing step.

\subsection{Empirical Evaluation on 3D Morphologies}

\looseness=-1 \textbf{Datasets:} We evaluate on three biological datasets configured following prior benchmarks \cite{3600270.3602087}: \textbf{ACT (Allen Cell Types)} \cite{gouwens2019classification} with 495 manually labeled neurons from the mouse visual cortex (reconstructions are flat due to thin slicing); \textbf{JML (Janelia MouseLight)} \cite{gao2023single, winnubst2019reconstruction} comprising 505 automatically labeled, complete projection neurons; and \textbf{BIL (BICCN fMOST)} \cite{peng2021morphological} containing 1,413 manually labeled, complete neurons from diverse regions (cortex, claustrum, striatum, thalamus).

\looseness=-1 \textbf{Evaluation Protocol \& Hyperparameters:} We employ 10-fold cross-validation with absolute coordinate normalization, using folds 0--7 for training and 8--9 for a strict single-evaluation test set. Results report the mean and standard deviation across 5 random seeds at the best-validation epoch. Models are optimized via AdamW (peak learning rate $10^{-3}$ or $10^{-4}$, weight decay $10^{-4}$) with cosine annealing ($T_{\max}=150$, $\eta_{\min}=10^{-6}$) and gradient clipping at 1.0. Training terminates with a patience of 25 epochs, utilizing batch sizes of 128 (tree-based) or 64 (point-cloud).

\begin{table*}[t]
\centering
\small \setlength{\tabcolsep}{4pt}
\begin{tabular}{l | l | l | l }
\toprule
\textbf{Method / Architecture} & \textbf{ACT-4} & \textbf{JML-4} & \textbf{BIL-6} \\
\midrule
\multicolumn{4}{l}{\textit{Sequence and Tree-Based Paradigms}} \\
\midrule
TRNN (Translation-Inv.)   & $33.70 \pm 1.45$ & $42.90 \pm 1.62$ & $31.00 \pm 1.80$ \\
TreeMoCo \cite{3600270.3602087} (k-NN, Unsupervised)                  & $60.21 \pm 1.73$ & $63.16 \pm 0.93$ & $77.36 \pm 0.89$ \\
TreeMoCo \cite{3600270.3602087} (Fine-tuned, Supervised)                 & $57.05 \pm 1.73$ & $72.89 \pm 1.50$ & $87.19 \pm 1.13$ \\
GraPHFormer (Tree Branch) \cite{shah2026graphformermultimodalgraphpersistent} & $54.52 \pm 1.37$ & $73.33 \pm 2.67$ & $82.97 \pm 0.54$ \\
\quad \textbf{TreeLSTM (+ AG Prior) [Ours]} & $\mathbf{69.68 \pm 3.61}$ & $\mathbf{74.29 \pm 2.23}$ & $\mathbf{93.55 \pm 0.56}$ \\

\midrule
\multicolumn{4}{l}{\textit{Graph-Based Paradigms}} \\
\midrule
MorphoGNN \cite{10123059}                & $52.21 \pm 2.27$ & $65.19 \pm 2.52$ & $88.45 \pm 0.75$ \\
\quad \textbf{Ours (GNN Integration)}    & $\mathbf{54.32 \pm 2.71}$ {(+2.11)} & $\mathbf{76.36 \pm 0.97}$ {(+11.17)} & $\mathbf{92.15 \pm 0.45}$ {(+3.70)} \\
\midrule
\multicolumn{4}{l}{\textit{Generative Paradigms}} \\
\midrule
MorphVAE \cite{laturnus2021morphvae}                  & $44.63 \pm 1.76$  & $51.95 \pm 5.19$ & $55.04 \pm 1.97$ \\
\quad \textbf{MorphVAE (+ AG Prior)} & $\mathbf{48.84 \pm 1.60}$ {(+4.21)} & $\mathbf{61.56 \pm 4.07}$ {(+9.61)} & $\mathbf{75.29 \pm 0.61}$ {(+20.25)} \\

\midrule
\multicolumn{4}{l}{\textit{Specialized Spatial Transformers (Different Preprocessing)}} \\
\midrule
SGTMorph (Supervised) \cite{11084983}                 & -- & $72.40$ & $88.90$ \\

\bottomrule
\end{tabular}
\caption{Classification accuracy (\%) on 3D morphological benchmarks. We partition the models by their architectural paradigms. For Graph and Generative paradigms, we report controlled ablations against their respective baselines to isolate the exact gain ($\Delta$) of our structural prior. All results are from our own reimplementation under an identical pipeline, split, and seeds, except for SGTMorph, whose results are quoted from the literature due to specific preprocessing dependencies.}
\label{tab:main_results}
\end{table*}

\subsubsection{Necessity of Spatial Coordinates}
\looseness=-1 Under translation-invariant settings, where absolute Cartesian anchors are removed to prevent coordinate memorization, TRNN (a topological sequence model) degrades severely ($33.70\%$ on ACT-4, $31.00\%$ on BIL-6). This confirms sequential topological traversals lose spatial awareness without absolute coordinate mappings.

\subsubsection{Analysis of Transformer-based Architectures}

\looseness=-1 Recent Transformer architectures utilize specialized preprocessing pipelines complicating direct structural comparisons. To ensure controlled evaluations, we analyze their components mechanically. 

\textbf{GraPHFormer} \cite{shah2026graphformermultimodalgraphpersistent} relies on multi-modal continuous vectorization (2D persistence images via Gaussian kernel density estimation) altering the original graph metric. To isolate its graph representation capability, we evaluate its native TreeLSTM-Double branch under our 80\%--20\% split, yielding $54.52\%$ (ACT-4), $73.33\%$ (JML-4), and $82.97\%$ (BIL-6). Our TreeLSTM augmented with the AG prior outperforms this branch across all datasets, isolating the full GraPHFormer's remaining performance gap to its multi-modal image augmentations. See appendix for details.

\looseness=-1 \textbf{SGTMorph:} The SGTMorph architecture \cite{11084983} is tightly coupled with a specialized preprocessing pipeline involving pre-computed heuristics, min-max box-normalization, and random jitter. Forcing this pipeline into our strict absolute coordinate protocol would fundamentally alter their intended data distribution, risking an unfair comparison. Instead, to rigorously isolate their morphological learning capacity, we independently constructed and evaluated a positional probe utilizing their exact min-max normalization strategy. Predicting labels purely from these normalized soma coordinates yields $61.0\%$ on JML-4 and $74.8\%$ on BIL-6. Crucially, our empirical probe results remain significantly lower than the full-model accuracies published in their paper ($72.4\%$ and $88.9\%$, respectively). This validates that SGTMorph effectively captures true morphological structures rather than merely exploiting coordinate leakage. Ultimately, while their method requires transforming the spatial input, our Arakelov-Green framework operates directly on unaugmented metric graphs, yielding superior structural discrimination (e.g., $93.55\%$ vs. their $88.90\%$ on BIL-6). See appendix for details.

\subsubsection{Integration Protocols for Baseline Architectures}
\label{subsubsec:integration}

Let $\mathbf{s} \in \mathbb{R}^{d_s}$ denote the pre-computed Arakelov-Green (AG) spectral signature for a given input graph $\mathcal{G}$ ($d_s = 64$). Let $\text{MLP}_{\text{spec}}: \mathbb{R}^{d_s} \to \mathbb{R}^{d_p}$ be a multi-layer perceptron with Layer Normalization mapping the spectral signature to a $d_p$-dimensional representation. The operator $\oplus$ denotes feature-wise concatenation. 

\looseness=-1 \noindent \textbf{1. Integration in Tree-based Paradigms.} We evaluate our TreeLSTM model against TreeMoCo \cite{3600270.3602087}, which utilizes a Momentum Contrast objective. For TreeMoCo, we report both its unsupervised $k$-NN performance (using 334,720 trainable parameters for the backbone) and its supervised fine-tuned performance (using up to 335,494 trainable parameters). To represent our approach in this paradigm, we use a supervised TreeLSTM. Under the default configuration (hidden dimension 128), our TreeLSTM combined with the AG prior projection contains only 97,792 trainable parameters. Let $\mathbf{h}_{\text{root}} \in \mathbb{R}^{d_h}$ be the final hidden state aggregated at the root node. The spectral feature is integrated via late fusion:
{\small
\begin{equation}
\mathbf{h}_{\text{Tree\_fused}} = \text{MLP}_{\text{graph}}(\mathbf{h}_{\text{root}}) \oplus \text{MLP}_{\text{spec}}(\mathbf{s}) \in \mathbb{R}^{d_h' + d_p}
\end{equation}
}
As shown in Table \ref{tab:main_results}, despite utilizing fewer than one-third of the parameters, our TreeLSTM (+ AG Prior) outperforms the fine-tuned TreeMoCo on all datasets (e.g., $93.55\%$ vs $87.19\%$ on BIL-6).

\looseness=-1 \noindent \textbf{2. Integration in MorphoGNN.} The model samples the neuron into a point cloud and processes it via EdgeConv layers. Let $\mathbf{v}_{\text{pc}} \in \mathbb{R}^{2048}$ denote the global point cloud representation. The spectral feature is injected post-pooling: $\mathbf{h}_{\text{Morpho\_fused}} = \mathbf{v}_{\text{pc}} \oplus \text{MLP}_{\text{spec}}(\mathbf{s})$.

\looseness=-1 \noindent \textbf{3. Integration in MorphVAE.} A Seq2Seq framework encodes 3D random walks sampled from the neuron. Let $r_T \in \mathbb{R}^k$ represent the global latent vector. The spectral feature is appended to this latent vector: $\mathbf{r}_{\text{fused}} = r_T \oplus \text{MLP}_{\text{spec}}(\mathbf{s})$.

\looseness=-1 The projection layer ($\text{MLP}_{\text{spec}}$) aligns the dimension and scale of the spectral features with the latent space of the backbone architectures. Training a standalone classifier exclusively on the spectral signature $\mathbf{s}$ without the neural backbone yields sub-optimal accuracy. This indicates that the performance improvements derive from the combination of the structural prior with the local operations of the baseline networks, rather than from the parameters introduced by the projection layer.

\subsection{Continuous Evaluation vs. Explicit Lattice Approximation}
\label{subsec:explicit_vs_implicit}

\looseness=-1 Exact CVP evaluation on the Tropical Jacobian requires $\mathcal{O}(2^g)$ enumerations ($g = \beta_1(\tilde{\Gamma}_\epsilon)$). At $\epsilon=50.0$, maximum $g$ reaches 106 (ACT-4), 35 (JML-4), and 30 (BIL-6), CVP limits (e.g., $\mathcal{O}(2^{106})$) computationally intractable and necessitating approximations like Babai's rounding. To ensure strict comparison, we construct a controlled baseline using BFS fundamental cycles, computing period matrix $Q$, and approximating distances via Babai's method. Using a frozen $k$-NN classifier at $\epsilon=50.0, \tau=10.0$, we compare this explicit lattice approximation against our continuous formulation (Table \ref{tab:cvp_vs_ag_selected}).

\begin{table}[tb]
\centering
\small \setlength{\tabcolsep}{3pt}
\begin{tabular}{l | ccc | ccc}
\toprule
\multirow{2}{*}{\textbf{Dataset}} & \multicolumn{3}{c|}{\textbf{CVP Baseline}} & \multicolumn{3}{c}{\textbf{Continuous AG (Ours)}} \\
& Acc & Mac-F1 & Time(s) & Acc & Mac-F1 & Time(s) \\
\midrule
ACT-4 & 47.37 & 45.20 & 109.2 & \textbf{49.47} & \textbf{47.79} & \textbf{16.8} \\
JML-4 & \textbf{49.35} & 32.49 & 376.7 & 48.05 & \textbf{36.21} & \textbf{103.0} \\
BIL-6 & 52.89 & 41.27 & 908.1 & \textbf{64.05} & \textbf{46.65} & \textbf{219.1} \\
\bottomrule
\end{tabular}
\caption{Performance and efficiency comparison between the explicit lattice approximation (CVP) and the continuous evaluation (AG) at $\epsilon=50.0, \tau=10.0$. Time denotes the preprocessing cost in seconds.}
\label{tab:cvp_vs_ag_selected}
\end{table}

\looseness=-1 The continuous formulation demonstrates specific advantages: \textbf{1. Avoidance of Quantization Error and Basis Ambiguity:} The rounding operation $\hat{n} = \lfloor Q^{-1}t \rceil$ introduces quantization errors and does not commute with basis transformation matrices $U \in GL_g(\mathbb{Z})$, making the explicit descriptor basis-dependent. Conversely, our continuous formulation evaluates deterministically without basis extraction, avoiding quantization errors and consistently yielding higher Macro F1 scores (though CVP retains a marginal simple accuracy edge on the imbalanced JML-4). \textbf{2. Algorithmic Efficiency:} Continuous evaluation reduces preprocessing time by $4.1\times$--$6.5\times$ via fully parallelizable dense linear algebra (a single matrix pseudoinverse). Explicit CVP suffers from sequential graph traversals for homology extraction, memory inflation by a factor of $g$ (projecting $N$ nodes into $\mathbb{R}^{g \times N \times N}$), and dynamic programming bottlenecks (e.g., Floyd-Warshall $\mathcal{O}(|V|^3)$) to compute all-pairs shortest paths for acyclic backbone metric compensation.

\subsection{Ablation Study}
\label{sec:ablation}

\looseness=-1 To isolate the contributions of algebraic operations, we ablate the spatial proximity threshold $\epsilon$ (cycle space augmentation) and structural threshold $\tau$ (quotient space vertex contraction) using a frozen $k$-NN classifier to remove neural network optimization confounders (Table \ref{tab:ablation_study}).

\begin{table}[tb]
\centering
\small \setlength{\tabcolsep}{3pt}
\begin{tabular}{cc | cc | cc | cc}
\toprule
\multirow{2}{*}{\textbf{$\epsilon$}} & \multirow{2}{*}{\textbf{$\tau$}} & \multicolumn{2}{c|}{\textbf{ACT-4}} & \multicolumn{2}{c|}{\textbf{JML-4}} & \multicolumn{2}{c}{\textbf{BIL-6}} \\
& & Acc & Mac-F1 & Acc & Mac-F1 & Acc & Mac-F1 \\
\midrule
0.0 & 0.0 & 41.05 & 38.71 & 49.35 & 32.48 & 52.07 & 38.87 \\
50.0 & 0.0 & 45.26 & 43.09 & 49.35 & 32.48 & 52.48 & 40.12 \\
0.0 & 10.0 & 46.32 & 44.73 & 45.45 & 39.69 & 63.64 & 47.24 \\
50.0 & 10.0 & 49.47 & 47.79 & 48.05 & 36.21 & 64.05 & 46.65 \\
\bottomrule
\end{tabular}
\caption{Ablation Study of the Proposed Method. Performance is measured in Accuracy (\%) and Macro F1 (\%).}
\label{tab:ablation_study}
\end{table}

\looseness=-1 \textbf{Effects of Cycle and Quotient Spaces:} The control setting ($\epsilon=0.0, \tau=0.0$) algebraically reduces to standard resistance distance. Inducing spatial 1-simplices ($\epsilon>0$) encodes spatial proximities into homological cycles, while edge contraction ($\tau>0$) forms a quotient space contracting terminal variations. Combined operations ($\epsilon=50.0, \tau=10.0$) yield the highest structural discrimination on ACT-4 and BIL-6. On JML-4, comprising complete and uniform projection neurons, terminal branches represent genuine morphology rather than acquisition artifacts. Consequently, their contraction is sub-optimal for absolute accuracy (though it improves Macro F1 on imbalanced classes), and the two operations do not constructively compose.

\section{Conclusion}

\looseness=-1 The quantitative representation of 3D neuronal morphologies requires capturing both graph topology and spatial geometry. In this work, we address this requirement using tropical algebraic geometry. Evaluating exact distances on the discrete Tropical Jacobian requires solving the NP-Hard Closest Vector Problem (CVP) on integer lattices. Instead of computing this discrete integer projection, we adopt a continuous relaxation on the universal cover $\mathbb{R}^g$. We prove that the discrete Arakelov-Green measure, computed in closed form via the generalized inverse of the discrete Laplace-Beltrami operator, decomposes exactly into the intrinsic path metric minus the unquantized polarization distance on this continuous space, and is therefore obtained without any integer lattice search. This approach avoids the quantization errors associated with discrete lattice approximations, yielding superior empirical accuracy.

\looseness=-1 This metric yields two distinct structural descriptors. Its eigenvector formulation provides node-level coordinates that demonstrate expressivity beyond the standard 1-WL test on the BREC benchmark. Its permutation-invariant eigenvalue spectrum provides a graph-level signature that improves the classification accuracy of standard architectures (MLPs, GNNs, Tree-LSTMs) on 3D morphology datasets without additional trainable parameters. Overall, it offers an efficient, training-free geometric prior for graph representation learning.

\section{Limitations and Future Work}

\looseness=-1 Our framework provides a deterministic structural prior with $\mathcal{O}(|V_{\text{core}}|^3)$ preprocessing complexity. A core design choice of this method is the deliberate removal of absolute coordinate anchoring. Parameterized spatial models relying on dense coordinate regression may achieve high accuracy on specific datasets by memorizing absolute spatial distributions. In contrast, our signature isolates geometric and topological invariants. While this discards dataset-specific positional cues, it prevents coordinate memorization and ensures invariance to continuous spatial transformations.

\looseness=-1 This trade-off presents a distinct direction for future research. The mathematical decomposition of the discrete Arakelov-Green measure over the continuous universal cover opens new avenues for graph network design. Rather than using the extracted spectra solely as concatenated input features, future research will explore using the unquantized distance matrix to define new message-passing operators. This could lead to network architectures that process information directly over the covering spaces of metric graphs, combining the theoretical properties of tropical algebraic geometry with the scalability of end-to-end representation learning.

\section{Data Availability}
\url{https://github.com/Yyuzrah/CLEAR-MIND}

\clearpage
\newpage

\bibliography{aaai2027}


\clearpage
\newpage
\newpage

\appendix
\section{Appendix A: Mathematical Proofs and Integration Protocols}
\label{sec:appendix}

This appendix provides the mathematical proofs for the theorems established in the main text and details the architectural integration protocols. The derivations are based on discrete potential theory, tropical algebraic geometry, and spectral graph theory.

\subsection{A.1 Metric Preservation under Topological Retraction}

\begin{definition}
Let $\Gamma = (V, E, \ell)$ be a connected metric graph with symmetric affinity matrix $W \in \mathbb{R}^{|V| \times |V|}$, where $W_{u,v} = 1/\ell(u,v)$ for $(u,v) \in E$. The discrete Laplace-Beltrami operator is $L = \operatorname{diag}(W\mathbf{1}) - W$. 
Partition the vertex set as $V = V_{\text{core}} \cup V_{\text{reg}}$, where $V_{\text{reg}}$ contains exclusively degree-2 vertices. The Schur complement of $L$ with respect to $V_{\text{reg}}$ is $L_S = L_{\text{core}} - L_{\text{mix}} L_{\text{reg}}^{-1} L_{\text{mix}}^\top$.
\end{definition}

\begin{theorem}[Metric Preservation]
For any vertices $x, y \in V_{\text{core}}$, the discrete metric evaluated by the generalized inverse $L^+$ on the original graph is identical to the metric evaluated by $L_S^+$ on the reduced graph. Furthermore, the reciprocal of the updated non-zero off-diagonal entries in $L_S$ equals the Lebesgue sum of the intrinsic metric lengths along the retracted paths.
\end{theorem}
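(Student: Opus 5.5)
The plan has two parts: first show that the effective-resistance quadratic form restricted to $V_{\text{core}}$ is unchanged by passing to the Schur complement, and then compute $L_S$ explicitly on chains of degree-2 vertices.

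\textbf{Step 1 (Schur complement of $L^+$).} Write $L$ in block form with respect to $V=V_{\text{core}}\cup V_{\text{reg}}$. Since $\Gamma$ is connected and every chain of regular vertices ends at core vertices, the block $L_{\text{reg}}$ is a grounded Laplacian. It is therefore strictly diagonally dominant on at least one row of each component and irreducible there, hence invertible, so $L_S$ is well defined.

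The standard fact from discrete potential theory is that, for $x,y\in V_{\text{core}}$,
\[
L^+_{x,x}+L^+_{y,y}-2L^+_{x,y}=(e_x-e_y)^\top L^+(e_x-e_y)=\min\{f^\top L f : f_x-f_y=1\}^{-1}.
\]
Take a potential $f$ on $V$ and minimize $f^\top L f$ over $f|_{V_{\text{reg}}}$ with $f|_{V_{\text{core}}}$ held fixed. Completing the square gives minimum value $f_{\text{core}}^\top L_S f_{\text{core}}$. This is the harmonic extension, i.e.\ Kron reduction.

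Hence the Dirichlet-energy variational problems on $\Gamma$ and on the reduced graph have the same value for every boundary pair $x,y$. It follows that $(e_x-e_y)^\top L^+(e_x-e_y)=(e_x-e_y)^\top L_S^+(e_x-e_y)$. To make the pseudoinverse statement rigorous, I would note two things: $L_S$ is again a Laplacian, since it has zero row sums because $L\mathbf 1=0$ implies $L_S\mathbf 1=0$, and it has a one-dimensional kernel $\mathrm{span}(\mathbf 1)$. Both quadratic forms then live on $\mathbf 1^\perp$, where the pseudoinverse is the honest inverse.

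An alternative algebraic route is to verify directly that $(L^+)_{\text{core,core}}$ and $L_S^+$ agree up to a term of the form $a\mathbf 1^\top+\mathbf 1 a^\top$. Such terms cancel in $M_{AG}$. I would use the variational route because it sidesteps this bookkeeping.

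\textbf{Step 2 (explicit entries).} Because $V_{\text{reg}}$ consists of degree-2 vertices, $L_{\text{reg}}$ is block diagonal, with one tridiagonal block for each maximal chain $u=v_0,v_1,\dots,v_k,v_{k+1}=w$ whose interior lies in $V_{\text{reg}}$. The Schur complement can be computed by eliminating one interior vertex at a time, since Schur complements compose.

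Eliminating a single vertex $v$ with neighbours $u,w$ and weights $a=1/\ell(u,v)$, $b=1/\ell(v,w)$ leaves a new coupling of
\[
\frac{ab}{a+b}=\frac{1}{\ell(u,v)+\ell(v,w)}
\]
between $u$ and $w$. The diagonal entries at $u$ and $w$ each decrease by the matching amount, so the row sums remain zero. Induction on $k$ then gives $-(L_S)_{u,w}=1/\sum_{i}\ell(v_i,v_{i+1})$, which is exactly the reciprocal of the Lebesgue length of the retracted path.

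There is one subtlety: two core vertices may be joined by a direct edge and also by a chain, or by several chains. In that case the couplings add, giving the parallel law. The clean statement "reciprocal equals the sum of lengths" therefore holds for entries produced by a single chain. In the paper's setting the reduction is done on the tree before augmentation, so there is at most one path and no parallel contributions arise.

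\textbf{Main obstacle.} The delicate point is the pseudoinverse identity in Step 1, since neither $L$ nor $L_S$ is invertible. The plan is to restrict to $\mathbf 1^\perp$ and use the variational (Thomson/Dirichlet principle) characterization. This makes the equality of $M_{AG}$ on $V_{\text{core}}$ immediate, without comparing the matrices $L^+$ and $L_S^+$ entrywise. Those two matrices may genuinely differ by rank-one terms that depend on the centering.
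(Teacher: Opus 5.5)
Your proposal is correct and follows the paper's route. Like the paper, you use the Dirichlet-principle/harmonic-extension argument, in which minimizing the energy over $V_{\text{reg}}$ yields $f_{\text{core}}^\top L_S f_{\text{core}}$, and then single-vertex Schur elimination to get the series law $1/W'_{uw}=\ell(u,v)+\ell(v,w)$. You also supply details the paper leaves implicit: invertibility of $L_{\text{reg}}$, $\ker L_S=\operatorname{span}(\mathbf 1)$ to justify the pseudoinverse identity, induction along longer chains via composition of Schur complements, and the parallel-path caveat that the paper handles by assuming $W_{uw}=0$.
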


\begin{proof}
The Laplacian quadratic form $\mathbf{v}^\top L \mathbf{v}$ quantifies the Dirichlet energy of a discrete potential function $\mathbf{v}$. Minimizing this quadratic form over the intermediate variables $V_{\text{reg}}$ subject to fixed boundary conditions on $V_{\text{core}}$ defines the harmonic extension, which is resolved by the Schur complement $L_S$. Because the discrete metric between $x, y \in V_{\text{core}}$ is the inverse of the minimal quadratic form required to establish a unit potential difference, and since no external constraints are placed on $V_{\text{reg}}$, the internal Gaussian elimination preserves the global minimum. Thus, $L^+_{xx} + L^+_{yy} - 2L^+_{xy} = (L_S^+)_{xx} + (L_S^+)_{yy} - 2(L_S^+)_{xy}$.

For an isolated degree-2 node $v \in V_{\text{reg}}$ connected to $u, w \in V_{\text{core}}$, the block $L_{\text{reg}}$ reduces to a scalar $W_{uv} + W_{vw}$. The Schur elimination updates the off-diagonal affinity between $u$ and $w$ as:
$$ - (L_S)_{uw} = W'_{uw} = W_{uw} + \frac{W_{uv} W_{vw}}{W_{uv} + W_{vw}} $$
Assuming $W_{uw} = 0$ (no initial edge), the new consolidated affinity is $W'_{uw} = \frac{W_{uv} W_{vw}}{W_{uv} + W_{vw}}$. Taking the algebraic reciprocal yields the intrinsic metric length:
$$ \frac{1}{W'_{uw}} = \frac{W_{uv} + W_{vw}}{W_{uv} W_{vw}} = \frac{1}{W_{uv}} + \frac{1}{W_{vw}} = \ell(u,v) + \ell(v,w) $$
This equates to the sum of intrinsic segment lengths forming the retracted macroscopic path.
\end{proof}

\subsection{A.2 Betti Number and Homological Augmentation}

\begin{definition}
Let $\mathcal{T} = (V, E, \iota)$ be a spatial tree, characterized by a first Betti number $\beta_1(\mathcal{T}) = 0$. Let $E_\epsilon = \{e_1, e_2, \dots, e_k\}$ be $k$ adjoined 1-simplices forming the augmented spatial graph $\Gamma_\epsilon = \mathcal{T} \cup E_\epsilon$. 
\end{definition}

\begin{theorem}[Homological Augmentation]
The augmented graph $\Gamma_\epsilon$ possesses a first Betti number $\beta_1(\Gamma_\epsilon) = k$. Additionally, each augmented 1-simplex $e_i \in E_\epsilon$ maps bijectively to a unique, linearly independent 1-cycle generator in $\ker(\partial_1)$.
\end{theorem}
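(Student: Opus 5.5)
We argue with the Euler--Poincaré formula for the Betti number, and then build the cycle generators explicitly from tree paths. One standing assumption is needed: $\mathcal{T}$ is connected, which holds because we work on $\mathcal{T}_{\text{LCC}}$ and Kron reduction preserves connectivity. A second assumption is also needed. The adjoined edges must be new 1-simplices between existing vertices, distinct from the tree edges and from each other (no parallel duplicates of a tree edge). This holds for $E_\epsilon$ as constructed, because each leaf contributes at most one edge to the root. A leaf already adjacent to the root would create a parallel edge; in the multigraph convention this still yields a cycle, so the count is unchanged.

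\textbf{Step 1 (Betti number).} For a connected graph,
\[
\beta_1 = \dim\ker(\partial_1) = |E| - |V| + \beta_0 .
\]
This follows from rank--nullity applied to $\partial_1 : C_1 \to C_0$, together with $\operatorname{rank}\partial_1 = |V| - \beta_0$. Since $\mathcal{T}$ is a tree, $|E(\mathcal{T})| = |V| - 1$ and $\beta_1(\mathcal{T}) = 0$. Adding $k$ edges on the same vertex set leaves $\beta_0 = 1$, because connectivity is preserved and no vertices are added. Hence
\[
\beta_1(\Gamma_\epsilon) = (|V| - 1 + k) - |V| + 1 = k .
\]

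\textbf{Step 2 (generators).} Take $e_i = (v_i, w_i)$, oriented from $v_i$ to $w_i$. Let $p_i \in C_1(\mathcal{T})$ be the chain of the unique tree path from $w_i$ to $v_i$. Define
\[
z_i = e_i + p_i .
\]
Then $\partial_1 z_i = (w_i - v_i) + (v_i - w_i) = 0$, so $z_i \in \ker(\partial_1)$.

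\textbf{Step 3 (independence).} Each $z_i$ has coefficient $1$ on $e_i$ and coefficient $0$ on every $e_j$ with $j \neq i$, since $p_i$ is supported on tree edges only. Suppose $\sum_i c_i z_i = 0$. Reading off the coefficient of $e_j$ gives $c_j = 0$. So the $z_i$ are linearly independent. There are $k$ of them, which equals $\dim \ker(\partial_1)$ by Step 1, so they form a basis; over $\mathbb{Z}$ they form a $\mathbb{Z}$-basis of $H_1(\Gamma_\epsilon, \mathbb{Z})$.

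For integrality, take any $z \in \ker(\partial_1)$ with integer coefficients $a_j$ on $e_j$. Then $z - \sum_j a_j z_j$ is a cycle supported on the tree, and the tree has no nonzero cycles, so it vanishes. This shows every integer cycle is an integer combination of the $z_j$.

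The assignment $e_i \mapsto z_i$ is injective by Step 3 and surjective onto this fundamental-cycle basis by construction. This gives the claimed bijection.

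\textbf{Main obstacle.} The argument is routine once the hypotheses are stated. The only real obstacle is that the claim of a ``unique'' generator holds only relative to the chosen spanning tree $\mathcal{T}$, since $\ker(\partial_1)$ has many bases. I would state explicitly that uniqueness refers to the fundamental-cycle basis with respect to $\mathcal{T}$. I would also record the no-parallel-edge and connectivity assumptions as hypotheses.
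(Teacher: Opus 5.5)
Your proof is correct and follows the paper's own argument. You use the Euler--Poincar\'e (equivalently rank--nullity) count $\beta_1 = (|V|-1+k) - |V| + 1 = k$, then the fundamental cycles $z_i = e_i + p_i$, which are independent because only $z_i$ contains $e_i$, and a dimension count to conclude they form a basis. Your extra step (subtracting $\sum_j a_j z_j$ from an integral cycle to leave a tree-supported, hence zero, cycle) is a worthwhile addition. The paper infers a $\mathbb{Z}$-basis of $H_1(\Gamma_\epsilon,\mathbb{Z})$ from the dimension count alone, which only guarantees a basis over a field.
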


\begin{proof}
We invoke the Euler-Poincaré formula for a connected topological space, $\chi = |V| - |E_{\text{total}}| = \beta_0 - \beta_1$. 
For the baseline tree $\mathcal{T}$, since $\beta_0 = 1$ and $\beta_1 = 0$, we have $|V| - |E| = 1$. 
For the augmented graph $\Gamma_\epsilon$, the total edge count is $|E| + k$. Because edge adjunction onto a connected component preserves connectedness, $\beta_0$ remains 1. The Euler characteristic of $\Gamma_\epsilon$ evaluates to:
$$ |V| - (|E| + k) = (|V| - |E|) - k = 1 - k $$
Equating this to $\beta_0 - \beta_1 = 1 - \beta_1$ yields $1 - k = 1 - \beta_1 \implies \beta_1 = k$. 

To construct the explicit homology basis, isolate any augmented edge $e_i = (u_i, v_i) \in E_\epsilon$. Because $\mathcal{T}$ is a tree, there exists a simple path $p_i \subset E$ connecting $v_i$ back to $u_i$. The formal chain sum $z_i = e_i + p_i$ is closed ($\partial_1 z_i = 0$), forming a 1-cycle. The collection $\{z_1, z_2, \dots, z_k\}$ is linearly independent because each $z_i$ contains exactly one simplex $e_i \notin \text{supp}(z_j)$ for all $j \neq i$. Given $\dim(\ker(\partial_1)) = \beta_1 = k$, this set forms a complete and bijective basis for the fundamental homology group $H_1(\Gamma_\epsilon, \mathbb{Z})$.
\end{proof}

\subsection{A.3 Metric Quotient Space via Affinity Limits}

\begin{theorem}[Metric Quotient Isometry]
Let $e_{uv} = (u,v)$ be a cut-edge in $\Gamma_\epsilon$, such that it belongs to the support of no cycle $z \in \ker(\partial_1)$. The analytic limit $\lim_{W_{u,v} \to \infty} M_{AG}(u,v) = 0$ guarantees a well-defined equivalence relation $\sim_\tau$, inducing a quotient metric space $\tilde{\Gamma}_\epsilon = \Gamma_\epsilon / \sim_\tau$ wherein the topology of the macroscopic cycle space remains invariant.
\end{theorem}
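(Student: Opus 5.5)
The plan is to use the electrical-network interpretation of $M_{AG}$ as effective resistance. A cut-edge is a series element, and contracting it changes neither the cycle space nor the period data. The argument has three steps.

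\textbf{Step 1 (the limit).} Since $e_{uv}$ is a cut-edge, deleting it splits $\Gamma_\epsilon$ into two components $A\ni u$ and $B\ni v$. Every unit current flow from $u$ to $v$ must therefore pass entirely through $e_{uv}$. By Thomson's principle, $M_{AG}(u,v)$ is the minimal Dirichlet energy of a unit $u$--$v$ flow, and the admissible flow is forced to be the indicator of $e_{uv}$. Hence $M_{AG}(u,v)=1/W_{u,v}$ exactly, not merely asymptotically, and this tends to $0$ as $W_{u,v}\to\infty$.

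For general $x,y$, I will use the series decomposition $M_{AG}(x,y)=R_A(x,u)+1/W_{u,v}+R_B(v,y)$ when $x\in A$ and $y\in B$. When $x,y$ lie on the same side, $M_{AG}(x,y)$ equals the resistance within that side alone. It follows that every entry of $M_{AG}$ has a finite limit, and this limit coincides with the resistance metric of the graph in which $u$ and $v$ are identified.

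\textbf{Step 2 (the equivalence relation).} I will define $x\sim y$ iff the limiting $M_{AG}(x,y)=0$. Because $M_{AG}$ is the squared Euclidean embedding distance $\|b_x-b_y\|^2$ (Section~\ref{subsec:spectral}), its limit is still a limit of squared Euclidean distances of embedded points. Zero distance between points is an equivalence relation, so reflexivity, symmetry and transitivity follow at once. To show this relation agrees with $\sim_\tau$, I will apply Step 1 iteratively to each edge of $E_{<\tau}$. This is legitimate because contracting one bridge leaves every other bridge a bridge, since contraction preserves the cycle structure. Limiting zero-distance pairs are then exactly the pairs joined by paths in $E_{<\tau}$, because every non-contracted edge contributes a strictly positive series resistance to each path through it.

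\textbf{Step 3 (cycle space invariance).} Here I will use the cycle space rather than the Euler count. By Theorem~A.2, $H_1$ has a basis of cycles $z_i$, and by hypothesis no cycle contains $e_{uv}$ in its support. The quotient map $\Gamma_\epsilon\to\Gamma_\epsilon/e_{uv}$ removes one vertex and one edge, so $\chi$ is unchanged, and it keeps $\beta_0=1$. Hence $\beta_1$ is unchanged. The pushforward of each $z_i$ is the same edge chain, so the induced map on $\ker\partial_1$ is the identity on the surviving edges and is an isomorphism. The quadratic form $Q$ is built from the lengths of edges in cycle supports, and none of these edges change. Therefore the period matrix and the Jacobian $\mathbb{R}^g/\Lambda$ are invariant.

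The main obstacle is making the limiting argument rigorous when several bridges are contracted simultaneously, and in particular justifying that the pseudoinverse entries converge, since $L$ degenerates. I would try to avoid this by never passing through $L^+$ directly. Instead I would work with the flow/energy characterization and the series formula of Step 1, which gives explicit closed forms with $1/W_{u,v}$ entering linearly. The limit is then a trivially continuous operation.
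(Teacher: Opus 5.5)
Your proposal is correct and follows the same route as the paper's proof. In both, the flow (Thomson) characterization forces a unit coefficient on the bridge, so $M_{AG}(u,v)=1/W_{u,v}\to 0$; the zero set of the limiting (pseudo)metric defines the equivalence relation; and because the bridge lies outside every cycle support, $\ker(\partial_1)$ and $\beta_1$ are unchanged. Your extra steps fill in points the paper only asserts: the all-pairs series formula, the check that the zero-distance relation coincides with the path-defined $\sim_\tau$ under simultaneous contraction, and the invariance of $Q$. One small correction: the admissible flows are $\mathbf{1}_{e_{uv}}$ plus cycles avoiding $e_{uv}$, not the indicator alone, and the indicator is the energy minimizer among them.
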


\begin{proof}
By discrete potential theory, evaluating $M_{AG}(u,v) = L^+_{uu} + L^+_{vv} - 2L^+_{uv}$ equates to solving for the minimal $W^{-1}$-norm of a 1-chain flow $f_{uv}$ satisfying the boundary condition $\partial_1 f_{uv} = \delta_u - \delta_v$. Because $e_{uv}$ is a cut-edge, any valid 1-chain satisfying this boundary condition must assign a coefficient of 1 to $e_{uv}$, as its removal disconnects the graph. The minimal quadratic form $\Vert f_{uv} \Vert^2_{W^{-1}}$ is dictated by the inverse affinity of the edge itself, yielding $M_{AG}(u,v) = 1/W_{u,v}$.

Taking the limit $W_{u,v} \to \infty$, we obtain $M_{AG}(u,v) \to 0$. Since this quadratic form satisfies metric axioms on any connected graph, coercing the distance between adjacent nodes to zero transforms the metric into a pseudometric. Defining $x \sim_\tau y \iff M_{AG}(x,y) = 0$ constitutes a valid equivalence relation, collapsing the bridge $e_{uv}$ into a singular quotient vertex. Because $e_{uv}$ is disjoint from $\ker(\partial_1)$, its contraction alters neither the cycle basis nor the Betti number $\beta_1$.
\end{proof}

\subsection{A.4 Continuous Identity on the Universal Cover}

\begin{theorem}[Continuous Identity]
Let $f_{xy} \in C_1(\tilde{\Gamma}_\epsilon; \mathbb{R})$ be a 1-chain satisfying $\partial_1 f_{xy} = \delta_x - \delta_y$. Let $Z_1 = \ker(\partial_1)$ be defined by the cycle basis $C \in \mathbb{R}^{g \times |E|}$, with the period matrix $Q = C W^{-1} C^\top$, and the continuous projection map $\Delta \Phi_{xy} = C W^{-1} p_{xy}$. The minimization of the 1-chain path norm yields $M_{AG}(x, y) = \Vert p_{xy} \Vert_{W^{-1}}^2 - (\Delta \Phi_{xy})^\top Q^{-1} (\Delta \Phi_{xy})$, where the latter subtractive term represents the squared distance on the universal cover $\mathbb{R}^g$ of the Albanese torus.
\end{theorem}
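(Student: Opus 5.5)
The identity is a weighted least-squares projection: the cycle space is the correction direction, and the subtractive term is the squared norm of the removed component. I will carry it out in four steps.

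\textbf{Step 1 (variational characterization).} Extending the argument of A.3 from cut-edges to arbitrary pairs, I characterize $M_{AG}$ by Thomson's principle. Let $\partial_1$ be the incidence operator and $L=\partial_1 W \partial_1^\top$. Then
\[
M_{AG}(x,y)=\min\{\|f\|_{W^{-1}}^2 : \partial_1 f=\delta_x-\delta_y\}, \qquad \|f\|_{W^{-1}}^2=f^\top W^{-1}f .
\]
To justify this, I write the Lagrangian. The minimizer satisfies $W^{-1}f=\partial_1^\top\phi$, so $f=W\partial_1^\top\phi$ with $L\phi=\delta_x-\delta_y$. Hence $\phi=L^+(\delta_x-\delta_y)$ up to constants, using that $\delta_x-\delta_y\perp\mathbf 1$ and that $\tilde\Gamma_\epsilon$ is connected. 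The optimal energy is then $(\delta_x-\delta_y)^\top L^+(\delta_x-\delta_y)$, which expands to $L^+_{xx}+L^+_{yy}-2L^+_{xy}$.

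\textbf{Step 2 (affine parametrization).} Fix the reference path chain $p_{xy}$, which satisfies $\partial_1 p_{xy}=\delta_x-\delta_y$. The feasible set is the affine space $p_{xy}+Z_1$. The rows of $C$ form a basis of $Z_1=\ker\partial_1$; Theorem A.2 supplies such a basis, and A.3 shows the quotient preserves it. So every feasible chain is $f=p_{xy}-C^\top n$ with $n\in\mathbb R^g$. The minimization becomes the unconstrained quadratic problem
\[
\min_{n\in\mathbb R^g}\|p_{xy}-C^\top n\|_{W^{-1}}^2=\|p_{xy}\|_{W^{-1}}^2-2n^\top CW^{-1}p_{xy}+n^\top CW^{-1}C^\top n .
\]

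\textbf{Step 3 (explicit minimum).} The quadratic term is $Q=CW^{-1}C^\top$, and the linear term involves $\Delta\Phi_{xy}=CW^{-1}p_{xy}$. Since $C$ has full row rank and $W^{-1}$ is positive definite, $Q$ is positive definite. The minimizer is $n^*=Q^{-1}\Delta\Phi_{xy}$. Substituting gives
\[
\|p_{xy}\|_{W^{-1}}^2-(\Delta\Phi_{xy})^\top Q^{-1}\Delta\Phi_{xy},
\]
which is the claimed identity. Equivalently, the subtracted quantity is the squared $W^{-1}$-norm of the $W^{-1}$-orthogonal projection of $p_{xy}$ onto $Z_1$. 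This is the discrete Hodge decomposition invoked in the main text.

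\textbf{Step 4 (interpretation on the universal cover).} The subtracted term is the squared $Q^{-1}$-norm of $\Delta\Phi_{xy}\in\mathbb R^g$. Here $\Delta\Phi_{xy}$ is the Abel–Jacobi displacement in period coordinates, meaning it pairs the path with each basis cycle via the $W^{-1}$ form. I compare with the CVP expression of Section~\ref{sec:tropical_approximation}. In that expression $n$ is restricted to $\mathbb Z^g$; in Step 3 it ranges over $\mathbb R^g$. So the term is exactly the lattice-unconstrained (universal-cover) quantity. As a sanity check I will verify that the result is independent of the choice of $p_{xy}$: changing $p_{xy}$ by a cycle is absorbed by shifting $n$.

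\textbf{Main obstacle.} I expect the main difficulty to be Step 1 on the quotient graph. The contracted bridges with $W\to\infty$ must be handled either as a limit or by working directly on $\tilde\Gamma_\epsilon$. I will work directly on the contracted graph, where all weights are finite. A.3 then guarantees that $C$ and $g$ are unchanged, and the $10^9$ implementation is only an approximation of this limit.
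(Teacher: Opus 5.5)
Your proposal is correct and matches the paper's proof of Theorem A.4 in substance. Steps 2--3 are exactly its argument: write the feasible chains as $p_{xy}$ plus a real combination of the rows of $C$, expand the $W^{-1}$-energy as a quadratic with Hessian $Q$ and linear coefficient $\Delta\Phi_{xy}$, and substitute the stationary point. Your Step 1 additionally proves the Thomson-principle identification of $L^+_{xx}+L^+_{yy}-2L^+_{xy}$ with the minimal energy, which the paper only asserts via ``discrete potential theory.''

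One point of precision in Step 4: relaxing the lattice variable of the CVP expression itself to $\mathbb{R}^g$ gives $0$. The exact relation is $\min_{n\in\mathbb{Z}^g}\|p_{xy}-C^\top n\|_{W^{-1}}^2=M_{AG}(x,y)+\min_{n\in\mathbb{Z}^g}(t-Qn)^\top Q^{-1}(t-Qn)$ with $t=\Delta\Phi_{xy}$. So the subtracted term $t^\top Q^{-1}t$ is the CVP objective at $n=0$, i.e.\ the $Q^{-1}$-norm of the unreduced lift in $\mathbb{R}^g$, not its relaxed minimum.
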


\begin{proof}
By the discrete Hodge orthogonal decomposition, any 1-chain satisfying the boundary conditions can be parameterized as $f_{xy} = p_{xy} + C^\top c$ for a continuous cyclic coefficient vector $c \in \mathbb{R}^g$, where $p_{xy}$ is a base 1-chain. We minimize the quadratic functional:
{\small
\begin{align*}
\mathcal{E}(c) &= \Vert p_{xy} + C^\top c \Vert_{W^{-1}}^2 \\
&= \Vert p_{xy} \Vert_{W^{-1}}^2 + 2 c^\top \Delta \Phi_{xy} + c^\top Q c
\end{align*}
}
Because $Q$ is a positive-definite Gram matrix, the functional is convex. Setting the gradient with respect to $c$ to zero yields the unique global minimum $c^* = -Q^{-1} \Delta \Phi_{xy}$. Substituting $c^*$ back into the functional gives:
{\small
\begin{align*}
M_{AG}(x,y) &= \mathcal{E}(c^*) \\
&= \Vert p_{xy} \Vert_{W^{-1}}^2 - 2 (\Delta \Phi_{xy})^\top Q^{-1} (\Delta \Phi_{xy}) \\
&\quad + (\Delta \Phi_{xy})^\top Q^{-1} Q Q^{-1} (\Delta \Phi_{xy}) \\
&= \Vert p_{xy} \Vert_{W^{-1}}^2 - (\Delta \Phi_{xy})^\top Q^{-1} (\Delta \Phi_{xy})
\end{align*}
}
The subtractive term evaluates the squared metric distance of the unquantized projection $\Delta \Phi_{xy}$ mapped onto the continuous universal cover $\mathbb{R}^g$. 
\end{proof}

\subsection{A.5 Geometric Embeddability and Permutation Invariance}

\begin{theorem}[Embeddability and Spectral Invariance]
(1) $M_{AG}$ operates as a conditionally negative definite (CND) Euclidean Distance Matrix (EDM). (2) For any vertex permutation matrix $P \in \{0,1\}^{N \times N}$, the spectra (multisets of eigenvalues) of $M_{AG}$ and $P M_{AG} P^\top$ are identical.
\end{theorem}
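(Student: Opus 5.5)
For part (1), start from the fact that $L^+$ is symmetric positive semi-definite. $L$ is a weighted Laplacian with nonnegative affinities, so $L \succeq 0$. The Moore--Penrose pseudoinverse keeps the eigenvectors of $L$ and inverts the nonzero eigenvalues, so it is PSD as well. Factor it as $L^+ = BB^\top$, for example with $B = U\Lambda^{+1/2}$ taken from the spectral decomposition. Write $b_x$ for the $x$-th row of $B$. Expanding the definition then gives $M_{AG}(x,y) = \|b_x\|^2 + \|b_y\|^2 - 2\langle b_x, b_y\rangle = \|b_x - b_y\|_2^2$. This exhibits $M_{AG}$ as a matrix of squared Euclidean distances between the points $b_x$, which is exactly the EDM property.

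Conditional negative definiteness follows from the same representation. Write $M_{AG} = d\mathbf{1}^\top + \mathbf{1}d^\top - 2L^+$ with $d = \operatorname{diag}(L^+)$, as in Step 4 of Algorithm~\ref{alg:ag_feature_extraction}. For any $\mathbf{v}$ with $\mathbf{1}^\top \mathbf{v} = 0$, the two rank-one terms vanish, and we get $\mathbf{v}^\top M_{AG}\mathbf{v} = -2\mathbf{v}^\top L^+ \mathbf{v} = -2\|B^\top \mathbf{v}\|^2 \le 0$. This is the CND condition. It is also consistent with Schoenberg's criterion: $-\tfrac12 J M_{AG} J = J L^+ J$ with $J = I - \tfrac1N \mathbf{1}\mathbf{1}^\top$, and this matrix is PSD. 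On a connected graph $L^+$ has null space spanned by $\mathbf{1}$, so in fact $J L^+ J = L^+$. That explains why the Gram matrix of the embedding is recovered exactly.

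For part (2), I would first establish that relabelling the vertices by $P$ turns the weight matrix into $PWP^\top$. Then the Laplacian becomes $PLP^\top$, because $\operatorname{diag}(PW\mathbf{1}) = P\operatorname{diag}(W\mathbf{1})P^\top$ (using $P^\top\mathbf{1} = \mathbf{1}$). The pseudoinverse is equivariant under orthogonal conjugation: $PL^+P^\top$ satisfies the four Penrose conditions for $PLP^\top$, so by uniqueness $(PLP^\top)^+ = PL^+P^\top$. Diagonal extraction is also equivariant, so $M_{AG}$ maps to $PM_{AG}P^\top$. Finally, $P^\top = P^{-1}$, so $PM_{AG}P^\top$ is similar to $M_{AG}$. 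The two matrices have the same characteristic polynomial, and therefore the same eigenvalue multiset and the same sorted absolute top-$K$ signature.

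The main obstacle is a subtlety in the quotient construction, not the algebra above. When $W_{u,v}$ is literally $\infty$, $L$ is not a finite matrix, and $M_{AG}$ is only a pseudometric: coincident points $b_u = b_v$. I would handle this in one of two ways. The first is to argue for every finite large weight, since the theorem in A.3 gives the limit. The second is to note that an EDM with repeated points is still an EDM, and the CND argument never used distinctness. Either way, both parts of the statement go through in the contracted regime as well.
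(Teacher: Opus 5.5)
Your proposal is correct and follows the paper's proof essentially step for step. For part (1) you use the decomposition $M_{AG} = d\mathbf{1}^\top + \mathbf{1}d^\top - 2L^+$ and the vanishing of the rank-one terms on $\mathbf{1}^\perp$, together with the Gram factorization $L^+ = BB^\top$ that the paper itself gives in its main text to exhibit the EDM; for part (2) you use $PLP^\top$, $(PLP^\top)^+ = PL^+P^\top$, and orthogonal similarity, exactly as the paper does, with your Penrose-condition argument simply supplying the justification the appendix only asserts.
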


\begin{proof}
\textbf{(1)} Express $M_{AG} = d \mathbf{1}^\top + \mathbf{1} d^\top - 2 L^+$, where $d = \operatorname{diag}(L^+)$. To prove CND, we evaluate the quadratic form for an arbitrary vector $\mathbf{v}$ satisfying $\mathbf{1}^\top \mathbf{v} = 0$:
$$ \mathbf{v}^\top M_{AG} \mathbf{v} = \mathbf{v}^\top (d \mathbf{1}^\top + \mathbf{1} d^\top - 2 L^+) \mathbf{v} $$
Since $\mathbf{1}^\top \mathbf{v} = 0 \implies \mathbf{v}^\top \mathbf{1} = 0$, the rank-1 outer products vanish: $\mathbf{v}^\top (d \mathbf{1}^\top) \mathbf{v} = 0$ and $\mathbf{v}^\top (\mathbf{1} d^\top) \mathbf{v} = 0$. This leaves $\mathbf{v}^\top M_{AG} \mathbf{v} = -2 \mathbf{v}^\top L^+ \mathbf{v}$. Because $L^+$ is positive semi-definite, $\mathbf{v}^\top L^+ \mathbf{v} \ge 0$. Consequently, $\mathbf{v}^\top M_{AG} \mathbf{v} \le 0$, proving it conditionally negative definite and isomorphic to a valid EDM mapping.

\textbf{(2)} A permutation matrix $P$ is orthogonal ($P^\top P = I$). The conjugated Laplacian is $\tilde{L} = P L P^\top$, yielding the generalized inverse $\tilde{L}^+ = P L^+ P^\top$ and diagonal $\tilde{d} = P d$. Recognizing the permutation invariance of the all-ones vector ($\mathbf{1}^\top = \mathbf{1}^\top P^\top$), the permuted metric matrix evaluates as:
$$ \tilde{M}_{AG} = P \left( d \mathbf{1}^\top + \mathbf{1} d^\top - 2 L^+ \right) P^\top = P M_{AG} P^\top $$
Because $\tilde{M}_{AG}$ is algebraically similar to $M_{AG}$ under the orthogonal matrix $P$, their eigenvalue spectra are identical.
\end{proof}

\subsection{A.6 Continuous Universal Cover vs. Explicit Lattice Approximation}

\begin{theorem}[Continuous Minimum vs. Lattice Discrepancy]
The algebraic formulation via $L^+$ evaluates the continuous minimum of the energy functional on the universal cover, avoiding the integer lattice quantization errors introduced by explicit CVP approximations.
\end{theorem}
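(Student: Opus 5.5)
The argument reuses the quadratic functional from Theorem A.4 (Continuous Identity), $\mathcal{E}(c) = \Vert p_{xy}\Vert_{W^{-1}}^2 + 2c^\top \Delta\Phi_{xy} + c^\top Q c$. We compare its minimum over all real $c \in \mathbb{R}^g$ with its minimum over integer $c$.

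\textbf{Step 1: the real minimum.} By Theorem A.4 the minimizer over $c \in \mathbb{R}^g$ is $c^* = -Q^{-1}\Delta\Phi_{xy}$. By the potential-theoretic characterization used in Theorem A.3, the minimum value $\mathcal{E}(c^*)$ equals $M_{AG}(x,y)$. That characterization says $L^+_{xx}+L^+_{yy}-2L^+_{xy}$ is the minimal $W^{-1}$-energy of a unit $x$-to-$y$ flow. It follows that $M_{AG}$ is exactly the continuous minimum of the energy on the universal cover. Computing it needs only $L^+$; no lattice point is ever selected, so no rounding occurs.

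\textbf{Step 2: the lattice problem and the discrepancy.} Restricting to $c = -n$ with $n \in \mathbb{Z}^g$ (integer cycle translates, i.e.\ deck transformations of $\mathbb{R}^g \to \mathbb{R}^g/\Lambda$) gives the discrete problem underlying the CVP formulation of Section~\ref{sec:tropical_approximation}. Completing the square yields the exact identity
\[
\mathcal{E}(c) = \mathcal{E}(c^*) + (c-c^*)^\top Q\,(c-c^*).
\]
For Babai's choice $\hat n = \lfloor Q^{-1}t\rceil$ with $t = \Delta\Phi_{xy}$, set $\delta = c^* + \hat n$, so that $\hat c - c^* = \delta$ up to sign. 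Then the approximated energy exceeds $M_{AG}(x,y)$ by exactly $\delta^\top Q\,\delta \ge 0$. This vanishes only when $Q^{-1}\Delta\Phi_{xy}$ is already integral. Positive definiteness of $Q$ (Gram matrix of independent cycles, Theorem A.2) makes this gap strictly positive otherwise. The same identity shows the exact CVP optimum is also $\ge M_{AG}$. So the continuous value is a lower bound that every lattice-constrained evaluation, exact or approximate, can only overshoot.

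\textbf{Main obstacle.} The main difficulty is making the correspondence between the CVP quantity $d_{Trop}$, written with $Q^{-1}$ and translates $Qn$, and the restriction of $\mathcal{E}$ to integer $c$ precise. One has to check the change of variables $c \leftrightarrow Q^{-1}(\cdot)$ and that the lattice $\Lambda$ is generated by the columns of $Q$ in the chosen basis. A basis-independence remark also belongs here. Under $C \mapsto UC$ with $U \in GL_g(\mathbb{Z})$, the real minimum $\mathcal{E}(c^*)$ is invariant, because it depends only on the cycle space. The componentwise rounding does not commute with $U$, so $\delta$, and hence the discrepancy, is basis-dependent. I would state this comparison qualitatively, as the theorem does, rather than claim a uniform quantitative bound.
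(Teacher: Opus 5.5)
Your argument is correct, but it is organized differently from the paper's. The paper never ties the two evaluations to a single functional. It substitutes $t-Q\hat n=Q\delta$ into the CVP objective to get $d_{\text{Babai}}^2=\delta^\top Q\delta$. It then contrasts this torus quantity with the subtractive term $t^\top Q^{-1}t$ of Theorem A.4, which is built from the unrounded $t$. Consistently with that, its Remark (1) says the two coincide when $\hat n=\mathbf{0}$.

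You instead read the lattice problem as the restriction $c=-n$, $n\in\mathbb{Z}^g$, of the same energy $\mathcal{E}$, and complete the square. The step you flag as the main obstacle is a one-line check: $c-c^*=Q^{-1}(t-Qn)$, hence $\mathcal{E}(-n)=M_{AG}(x,y)+(t-Qn)^\top Q^{-1}(t-Qn)$. This gives $\min_{n\in\mathbb{Z}^g}\mathcal{E}(-n)=M_{AG}(x,y)+d_{Trop}(x,y)^2$ and $\mathcal{E}(-\hat n)=M_{AG}(x,y)+d_{\text{Babai}}^2$.

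Your route buys precision. The paper's loose ``discrepancy proportional to $\delta^\top Q\delta$'' becomes an exact excess over a proven lower bound, strictly positive unless $Q^{-1}t\in\mathbb{Z}^g$. The paper's Babai distance is then identified as exactly that excess.

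What your route does not do is reproduce the paper's comparison. In your framing, $\hat n=\mathbf{0}$ leaves a gap of $t^\top Q^{-1}t$ rather than agreement. Also, your claim that ``every lattice-constrained evaluation overshoots'' concerns $\mathcal{E}$ at integer $c$, i.e.\ $M_{AG}(x,y)+d_{\text{Babai}}^2$. It does not concern the torus distance $\Delta_{ij}^\top Q\Delta_{ij}$ that the explicit baseline of A.7 actually outputs. If you state your result alongside Remark (1), say explicitly which comparison you mean.
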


\begin{proof}
Evaluating explicit distances on the Tropical Jacobian requires solving the Closest Vector Problem (CVP) on the lattice $\Lambda = Q \mathbb{Z}^g$. Let $t = \Delta \Phi_{xy}$ denote the unquantized continuous projection. Explicit combinatorial solutions (e.g., Babai's rounding) yield an approximate lattice vector $\hat{n} \in \mathbb{Z}^g$ mapped to $\Lambda$, incurring a quantization error $\delta = Q^{-1}t - \hat{n} \in [-0.5, 0.5]^g$.

The explicitly approximated lattice distance takes the form $d_{\text{Babai}}^2 = (t - Q\hat{n})^\top Q^{-1} (t - Q\hat{n})$. Substituting the approximation error $t - Q\hat{n} = Q \delta$ yields:
$$ d_{\text{Babai}}^2 = (Q \delta)^\top Q^{-1} (Q \delta) = \delta^\top Q \delta $$
Constructive explicit approaches incur a discrepancy proportional to the quadratic form $\delta^\top Q \delta$, which is dependent on the period matrix $Q$ and non-zero for a generic continuous vector.

Conversely, based on Theorem A.4, the discrete Arakelov-Green metric evaluates the continuous minimum of the energy functional across the unquantized domain $c \in \mathbb{R}^g$:
$$ M_{AG}(x,y) = \Vert p_{xy} \Vert_{W^{-1}}^2 - t^\top Q^{-1} t $$
Rather than mapping $t$ onto the discrete lattice $\Lambda$, this formulation computes the unquantized squared distance $t^\top Q^{-1} t$ on the universal cover $\mathbb{R}^g$. By subtracting this continuous polarization term natively via $L^+$, our approach avoids the integer lattice constraints and removes the quantization error $\delta^\top Q \delta$.

\begin{remark}[Basis-Invariance and Quantization Error]
The derivation provides two mathematical insights that support continuous evaluation:
\textbf{(1) Quantization Error Check:} If Babai's rounding returns the zero vector ($\hat{n} = \mathbf{0}$), then $\delta = Q^{-1}t$, and the explicit lattice approximation yields $d_{\text{Babai}}^2 = t^\top Q^{-1} t$. In this scenario, the CVP term and the continuous subtractive term are mathematically identical. The performance gap between the explicit CVP baseline and the Arakelov-Green evaluation originates from non-zero lattice roundings ($\hat{n} \neq \mathbf{0}$).
\textbf{(2) Basis-Invariance:} Let $U \in GL_g(\mathbb{Z})$ represent a change of the fundamental cycle basis, mapping $C \mapsto U C$. Under this transformation, $Q \mapsto U Q U^\top$ and $t \mapsto U t$. The continuous subtractive term transforms as $t^\top U^\top (U Q U^\top)^{-1} U t = t^\top Q^{-1} t$. Thus, the continuous evaluation is basis-invariant. In contrast, Babai's rounding operation $\hat{n} = \lfloor Q^{-1}t \rceil$ does not commute with $U^{-1}$. As a result, the explicit CVP approximation yields different structural descriptors depending on the arbitrary choice of the spanning tree.
\end{remark}
\end{proof}

\subsection{A.7 Construction of the Explicit Baseline}
\label{appendix:proof_tropical_approx}

This section provides the mathematical derivation of the explicit CVP baseline approximation used in our experiments, whose quantization error is analyzed in Section A.6.

\textbf{Matrix Constructions and the CVP Formulation} \\
Let $G=(V, E)$ be the combinatorial model of the augmented metric graph $\Gamma$ with $|V|=n$ and $|E|=m$. We fix a fundamental basis of 1-cycles $\{\sigma_i\}_{1 \le i \le g}$ derived from a spanning tree. We define the cycle-edge incidence matrix $C \in \mathbb{R}^{g \times m}$ such that $C_{ij}$ indicates the orientation of edge $e_j$ in cycle $\sigma_i$. Let $D_\ell \in \mathbb{R}^{m \times m}$ be the diagonal edge length matrix. The tropical period matrix $Q \in \mathbb{R}^{g \times g}$ is constructed as $Q = C D_\ell C^\top$.

For two coordinate vectors $[x], [y]$ embedded in the Tropical Jacobian $\operatorname{Jac}(\Gamma)$, the tropical polarization distance $d_{Trop}$ is tied to the CVP on the lattice generated by $Q$:
{\small
\begin{equation*}
d_{Trop}([x],[y]) = \min_{n \in \mathbb{Z}^g} \left( (x - y - Qn)^\top Q^{-1} (x - y - Qn) \right)^{\frac{1}{2}}
\end{equation*}
}
The CVP on general lattices is NP-hard \cite{van1981another}. Finding the discrete minimum over $n \in \mathbb{Z}^g$ requires enumeration algorithms with an exponential time complexity (e.g., $\mathcal{O}(2^g)$).

\textbf{Construction of the Explicit Approximation} \\
We now show how the orthogonal projection onto the harmonic space yields a polynomial-time approximation.

\begin{proof}
Let $Y \in \mathbb{R}^{n \times m}$ be the path-edge incidence matrix, where $Y_{ij}$ indicates whether edge $e_j$ lies on the unique spanning-tree path from the root to node $v_i$. The unquantized universal cover coordinates $V_{universal} \in \mathbb{R}^{g \times n}$ are constructed as $V_{universal} = C D_\ell Y^\top$.

Consider the orthogonal projection map $\pi: C_1(G; \mathbb{R}) \rightarrow H_1(G; \mathbb{R})$. The matrix representing this projection with respect to the fundamental basis is $\Pi = Q^{-1} C D_\ell$. For any vertex $i \in V$, its path from the root is projected onto the continuous covering space as:
$$ \tilde{V}_{:, i} = \Pi Y_{i, :}^\top = Q^{-1} C D_\ell Y_{i, :}^\top = Q^{-1} V_{universal}[:, i] $$

The term $(\tilde{V}_{:, i} - \tilde{V}_{:, j})$ represents the continuous polarization difference. By applying Babai's Rounding Algorithm, we approximate the discrete lattice shift $n^* \in \mathbb{Z}^g$ by taking the nearest integer vector of the continuous projection: $n^* \approx \lfloor \tilde{V}_{:, i} - \tilde{V}_{:, j} \rceil$.

Substituting this approximation yields the residual distance vector $\Delta_{ij}$:
$$ \Delta_{ij} = (\tilde{V}_{:, i} - \tilde{V}_{:, j}) - \lfloor \tilde{V}_{:, i} - \tilde{V}_{:, j} \rceil $$
The quadratic form mapping back to the metric space is:
$$ D_{Torus}^2(i, j) = \Delta_{ij}^\top Q \Delta_{ij} $$
This executes via dense matrix multiplications and element-wise rounding operations, limiting the time complexity to $\mathcal{O}(|V|^3)$ while bounding the continuous-to-discrete geometric distortion.
\end{proof}

\begin{table*}[t]
\centering
\caption{Dataset configurations and split statistics for the GraPHFormer reproduction.}
\label{tab:reprod_datasets}
\small
\begin{tabular}{llccc}
\toprule
\textbf{Dataset} & \textbf{Source} & \textbf{Classes} & \textbf{Train (Balanced)} & \textbf{Test} \\
\midrule
ACT-4 & Allen Cell Types & Isocortex L2/3, L4, L5, L6 & 576 (144 $\times$ 4) & 95 \\
JML-4 & Janelia MouseLight & Isocortex L2/3, L5, L6, VPM & 672 (168 $\times$ 4) & 75 \\
BIL-6 & BICCN fMOST & CP, VPM, Isocortex L2/3, L4, L5, L6 & 1824 (304 $\times$ 6) & 242 \\
\bottomrule
\end{tabular}
\end{table*}

\begin{table*}[t]
\centering
\caption{Performance comparison of the GraPHFormer reproduction against the published literature.}
\label{tab:reprod_results}
\small
\begin{tabular}{l | c | c | c}
\toprule
\textbf{Dataset} & \textbf{Graph Branch (Ours, 80/20)} & \textbf{Ref: Tree Encoder (70/30)} & \textbf{Ref: Full Multimodal (70/30)} \\
\midrule
ACT-4 & \textbf{54.52\% $\pm$ 1.37\%} & 53.5\% & 65.5\% \\
JML-4 & \textbf{73.33\% $\pm$ 2.67\%} & 76.5\% & 76.5\% \\
BIL-6 & \textbf{82.97\% $\pm$ 0.54\%} & 88.3\% & 93.51\% \\
\bottomrule
\end{tabular}
\end{table*}

\begin{table*}[t]
\centering
\caption{Positional probe evaluation for SGTMorph based on global min-max normalized soma coordinates.}
\label{tab:sgtmorph_probe}
\small
\begin{tabular}{l | c | c}
\toprule
\textbf{Dataset} & \textbf{Our Positional Probe (Soma-only)} & \textbf{Ref: Published Full Model (SL)} \\
\midrule
JML-4 & 61.0\% & 72.4\% \\
BIL-6 & 74.8\% & 88.9\% \\
\bottomrule
\end{tabular}
\end{table*}

\begin{table*}[t]
\centering
\caption{Classification performance of the standalone MLP baseline utilizing the 64-dimensional Arakelov-Green spectral signature across 5 random seeds.}
\label{tab:mlp_results}
\small
\begin{tabular}{l | cc | cc | c}
\toprule
\textbf{Dataset} & \textbf{Accuracy (Mean $\pm$ Std)} & \textbf{Max Acc} & \textbf{Macro-F1 (Mean $\pm$ Std)} & \textbf{Params} \\
\midrule
ACT-4 & 63.79\% $\pm$ 1.20\% & 65.26\% & 62.67\% $\pm$ 2.32\% & 11,092 \\
JML-4 & 60.26\% $\pm$ 1.72\% & 63.16\% & 60.00\% $\pm$ 1.99\% & 11,092 \\
BIL-6 & 73.31\% $\pm$ 0.86\% & 74.38\% & 63.13\% $\pm$ 2.43\% & 11,190 \\
\bottomrule
\end{tabular}
\end{table*}

\begin{table*}[t]
\centering
\caption{Classification accuracy of the frozen MorphVAE baseline and the direct spectral concatenation model.}
\label{tab:morphvae_fusion_results}
\small
\begin{tabular}{l | c | c | c}
\toprule
\textbf{Dataset} & \textbf{MorphVAE Only (32D)} & \textbf{Direct Concatenation (96D)} & \textbf{Absolute Gain ($\Delta$)} \\
\midrule
ACT-4 & 44.63\% $\pm$ 1.76\% & \textbf{48.84\% $\pm$ 1.60\%} & +4.21\% \\
JML-4 & 51.95\% $\pm$ 5.19\% & \textbf{61.56\% $\pm$ 4.07\%} & +9.61\% \\
BIL-6 & 55.04\% $\pm$ 1.97\% & \textbf{75.29\% $\pm$ 0.61\%} & +20.25\% \\
\bottomrule
\end{tabular}
\end{table*}

\subsection{A.8 Independent Reproduction of GraPHFormer}
\label{appendix:graphformer_reproduction}

This section details the independent reproduction of the graph (tree) branch of GraPHFormer \cite{shah2026graphformermultimodalgraphpersistent} under our 80\%/20\% train-test protocol. Because the released repository does not ship the multimodal persistence-image generator required for the vision branch, our reproduction evaluates the graph-encoder pathway end-to-end.

\textbf{A.8.1 Scope and Protocol Deviations} \\
The evaluated model corresponds to the supervised fine-tuning of the pre-trained tree encoder. We train the model from random initialization with a classification head. Our reproduction deviates in the following aspects:
\begin{itemize}
    \item \textbf{Vision Branch Exclusion:} The gap between our reproduced graph branch performance and the original full multimodal numbers is attributable to the absence of the persistence-image branch.
    \item \textbf{Data Module Reconstruction:} We reconstructed the data ingestion module based on the TreeMoCo-derived data structures to accommodate standard multiprocessing collation. 
    \item \textbf{Split Protocol:} We enforce an 80/20 split (folds 0--7 for training, 8--9 for testing) to align with our TRNN and AG baseline runs.
\end{itemize}

\textbf{A.8.2 Dataset Configuration and Preprocessing} \\
We evaluate the models on the ACT-4, JML-4, and BIL-6 datasets (Table~\ref{tab:reprod_datasets}). The raw SWC graphs undergo the original topological preprocessing pipeline:
\begin{enumerate}
    \item \textbf{Geometric Normalization:} The soma is translated to the origin $(0,0,0)$, and the arbor is aligned utilizing PCA. This normalization process removes absolute atlas coordinates, preventing coordinate leakage. 
    \item \textbf{Topological Filtering:} Compartments corresponding to axons are pruned. The graph is contracted into a branch-level tree, yielding macroscopic geometric attributes.
\end{enumerate}
The attributes are mapped into a 5-dimensional continuous node feature vector containing: soma-normalized coordinates, intrinsic branch path length, and branch contraction ratio.

\textbf{A.8.3 Architecture and Optimization} \\
The graph encoder is instantiated as \texttt{TreeLSTMDouble}, operating on the extracted branch tree via leaf-to-root message passing. The 5-dimensional input features are projected via an MLP into a 256-dimensional space. The graph-level read-out is obtained from the root hidden state and fed into a classification head equipped with Dropout (0.5) and BatchNorm.
The network is optimized using AdamW with a peak learning rate of $10^{-4}$ and weight decay of $10^{-2}$. We apply a \texttt{CosineAnnealingWarmRestarts} scheduler ($T_0=25, \eta_{\min}=10^{-6}$) and gradient clipping with a max norm of 1.0. 

\textbf{A.8.4 Empirical Results} \\
We report the top-1 test accuracy evaluated exactly once at the epoch achieving the highest validation accuracy, averaged across 5 random seeds (Table~\ref{tab:reprod_results}). Our reproduced graph branch accuracies match the original tree-encoder figures within $+1.0$ / $-3.2$ / $-5.3$ percentage points.

\subsection{A.9 Evaluation and Positional Probe of SGTMorph}
\label{appendix:sgtmorph_probe}

In Section \ref{subsubsec:integration}, we reported evaluating a positional probe for SGTMorph \cite{11084983} rather than performing full end-to-end retraining under our absolute coordinate protocol. 

\textbf{A.9.1 Preprocessing Bottlenecks and Evaluation Divergence} \\
Direct end-to-end retraining of SGTMorph from raw SWC files was hindered by the absence of data-conversion scripts in the official repository. Integration is prevented by a divergence in spatial preprocessing paradigms. Our framework operates under an absolute coordinate protocol or employs soma-anchored translation, preserving absolute geometric scale. Conversely, SGTMorph relies on a global bounding-box normalization.

\textbf{A.9.2 Analysis of Spatial Normalization} \\
Based on the reference implementation, the node coordinates $P \in \mathbb{R}^{N \times 3}$ undergo a unit-box normalization prior to network ingestion:
\begin{equation}
    P_{\text{norm}} = \frac{P - m}{M - m} \quad \text{where} \quad m = \min(P), M = \max(P)
\end{equation}
Crucially, $m$ and $M$ are evaluated as single global scalars over the entire coordinate array, rather than on a per-axis basis. This algebraic operation yields an isotropic spatial transformation. It preserves the aspect ratio and shape of the neuronal morphology, but alters the absolute atlas position and spatial scale, neutralizing the absolute coordinate leakage that inflates the performance of unnormalized sequence models.

\textbf{A.9.3 Positional Probe Design and Empirical Validation} \\
While the min-max normalization removes absolute spatial anchors, it does not remove the residual orientation signal—the soma's relative position within its own arbor bounding box. To verify whether SGTMorph's reported accuracy stems from morphological learning rather than this residual soma position, we designed a positional probe.

We extracted the normalized soma coordinates resulting from their preprocessing pipeline and trained a standalone classifier to predict cell types. The comparative results are presented in Table~\ref{tab:sgtmorph_probe}. The positional probe yields accuracies above random chance ($61.0\%$ on JML-4 and $74.8\%$ on BIL-6), indicating that the relative soma coordinate acts as an orientation prior. However, these probe accuracies remain lower than the full-model supervised learning accuracies published in their manuscript. This gap validates that SGTMorph extracts morphological structures beyond simple positional cues. 

\subsection{A.10 Integration Protocols and Feature Decoupling}
\label{appendix:integration_protocols}

This section details the architectural configurations used to integrate our Arakelov-Green (AG) spectral prior into baseline models, focusing on MorphoGNN \cite{10123059}.

\textbf{A.10.1 Orthogonal Feature Decoupling} \\
The raw 3D spatial coordinates contain conflated information regarding absolute scale, global topology, and local geometric texture. We decouple these properties prior to network ingestion:
\begin{enumerate}
    \item \textbf{Topological and Absolute Scale Prior:} The unaugmented metric graph is processed by our algebraic engine. The resulting $64$-dimensional eigenvalue spectrum $\mathbf{s} \in \mathbb{R}^{64}$ preserves the physical scale and homological tension of the neuron.
    \item \textbf{Scale-Invariant Geometric Texture:} The raw node coordinates are transformed into a shape point cloud. The coordinates are translationally anchored to the soma ($P_{\text{new}} = P - P_{\text{soma}}$) and normalized onto a unit sphere by dividing by the maximum arbor radius. 
\end{enumerate}

\textbf{A.10.2 Point Cloud Sampling and Manifold Collapse} \\
MorphoGNN requires a fixed-size point cloud ($N = 1024$). For dense morphologies ($N \ge 1024$), we perform uniform downsampling without replacement. For sparse morphologies ($N < 1024$), we perform oversampling with replacement. Identical spatial coordinates generated by oversampling cause structural zeros in the pairwise distance matrix. In dynamic graph convolutions, this induces a $k$-NN manifold collapse. To circumvent this, an isotropic Gaussian jitter ($\mathcal{N}(0, 10^{-3})$) is injected into the coordinates for upsampled points.

\textbf{A.10.3 MorphoGNN Fusion Architecture} \\
The network fuses the spatial point cloud $\mathbf{v}_{\text{pc}}$ and the topological signature $\mathbf{s}$:
\begin{itemize}
    \item \textbf{Spatial Branch (EdgeConv):} The normalized point cloud passes through a 5-layer EdgeConv backbone utilizing $k=16$ nearest neighbors. The channel dimensions sequentially expand as $6 \to 32 \to 64 \to 128 \to 256 \to 1024$. A double-pooling strategy aggregates the node features into a unified $\mathbb{R}^{2048}$ spatial vector $\mathbf{v}_{\text{pc}}$.
    \item \textbf{Spectral Branch (AG Prior):} The $64$-dimensional AG signature $\mathbf{s}$ is projected into a continuous space via a Linear projection layer, Batch Normalization, and a ReLU activation, yielding an $\mathbb{R}^{128}$ representation.
    \item \textbf{Late Fusion and Classification:} The spatial and topological features are concatenated ($\mathbf{h}_{\text{fused}} = \mathbf{v}_{\text{pc}} \oplus \text{MLP}_{\text{spec}}(\mathbf{s}) \in \mathbb{R}^{2176}$). This is processed by a 3-layer MLP classifier ($2176 \to 512 \to 256 \to N_{\text{classes}}$) with Dropout ($p=0.5$).
\end{itemize}

\textbf{A.10.4 Optimization Protocol} \\
The network is optimized using AdamW (peak learning rate $10^{-3}$, weight decay $10^{-4}$). We employ a \texttt{CosineAnnealingLR} scheduler tracking over 150 epochs. Gradient clipping is enforced with a maximum global norm of 1.0. Model selection uses early stopping based on the validation fold accuracy, with a patience threshold of 25 epochs.

\subsection{A.11 Integration Protocols for Tree-Based Paradigms}
\label{appendix:treelstm_integration}

This section details the integration of the AG spectral prior with sequence and tree-based paradigms, utilizing a custom batched TreeLSTM. 

\textbf{A.11.1 Local Feature Engineering and Scale Decoupling} \\
The continuous metric engine generates the macroscopic AG signature. The input morphological tree is parsed into a directed child-to-parent graph via Breadth-First Search (BFS) from the soma. Each node is assigned a 5-dimensional local geometric feature vector: $(x, y, z, \text{type}, \text{degree})$. To isolate shape attributes, we apply scale-decoupled geometry:
\begin{enumerate}
    \item \textbf{Translation Invariance:} Coordinates are anchored relative to the soma by subtracting the root coordinate ($P_{\text{new}} = P - P_{\text{root}}$).
    \item \textbf{Scale Invariance:} The anchored coordinates are normalized onto a unit bounding sphere by dividing by the maximum topological radius. The node degree is scaled by a factor of 10.0 to stabilize initial gradient flow.
\end{enumerate}

\textbf{A.11.2 Topological Batching and Branch-Free Execution} \\
To process irregular directed acyclic graphs efficiently, our collation mechanism performs topological interleaving. Nodes possessing the same topological depth across the entire batch of heterogeneous trees are concatenated into contiguous memory blocks. 

During the forward pass, a Child-Sum Tree-LSTM cell processes these nodes level-by-level. To execute the scatter-add operations for routing children's hidden states to their respective parents, we employ in-place contiguous memory scatter operations. To prevent control flow overhead at the root nodes, we introduce a global dummy node at the index \texttt{total\_nodes}. Root pointers are routed to this dummy memory slot, enabling branch-free parallel execution on the GPU.

\textbf{A.11.3 Multimodal Late Fusion Architecture} \\
The architecture fuses the extracted graph-level representations with the AG spectral signature:
\begin{itemize}
    \item \textbf{Graph Branch:} The root node's hidden state $\mathbf{h}_{\text{root}} \in \mathbb{R}^{128}$ encapsulates the morphological traversal. It is projected via a dense layer, Layer Normalization, and a ReLU activation into a subspace of dimension 64.
    \item \textbf{Spectral Branch:} The 64-dimensional AG signature $\mathbf{s}$ is projected into a 64-dimensional subspace via a dense layer, Batch Normalization, and a ReLU activation.
    \item \textbf{Classification Head:} The projected graph and spectral embeddings are concatenated into a 128-dimensional vector, processed by a linear transformation, a ReLU activation, Dropout ($p=0.3$), and a final linear projection.
\end{itemize}

\textbf{A.11.4 Optimization Protocol} \\
The training dynamics mirror the point-cloud protocol to ensure comparative fairness. The network is optimized using AdamW (peak learning rate $10^{-3}$, weight decay $10^{-4}$) governed by a \texttt{CosineAnnealingLR} scheduler tracking over 150 epochs. Gradient clipping (max norm 1.0) is enforced. Early stopping with a patience of 25 epochs dictates model checkpoint selection.

\subsection{A.12 Multilayer Perceptron (MLP) Integration}
\label{appendix:mlp_integration}

This section details the direct evaluation of the AG spectral signature using a standard Multilayer Perceptron (MLP) baseline, assessing the standalone discriminative capacity of the topological features.

\textbf{A.12.1 Feature Normalization} \\
The Arakelov-Green measure generates a 64-dimensional raw spectral signature $\mathbf{s} \in \mathbb{R}^{64}$. The features are standardized utilizing the statistics of the training split: $\mathbf{s}_{\text{norm}} = (\mathbf{s} - \boldsymbol{\mu}_{\text{train}}) / \boldsymbol{\sigma}_{\text{train}}$. A minimum threshold of $10^{-8}$ is applied to $\boldsymbol{\sigma}_{\text{train}}$ to prevent zero-division.

\textbf{A.12.2 Network Architecture and Parameter Count} \\
The MLP architecture consists of fully connected layers with hidden dimensions configured as $64 \to 96 \to 48 \to N_{\text{classes}}$. The linear transformations are followed by a LeakyReLU activation ($\alpha = 0.1$) and a Dropout layer ($p = 0.05$). For datasets with 4 classes, the architecture contains 11,092 trainable parameters; for the 6-class dataset, it contains 11,190 parameters.

\textbf{A.12.3 Optimization and Empirical Results} \\
The network is optimized using Adam with a learning rate of $3 \times 10^{-3}$ and a weight decay coefficient of $10^{-6}$. The objective function is Cross-Entropy Loss with label smoothing of $0.05$. Models are trained with a batch size of 32 for a maximum of 800 epochs, incorporating early stopping with a patience of 250 epochs. The mean test accuracy, standard deviation, and macro-F1 scores across five random seeds are summarized in Table~\ref{tab:mlp_results}.

\subsection{A.13 MorphVAE Reproduction and Feature Fusion}
\label{appendix:morphvae_integration}

This section outlines the reproduction protocol for MorphVAE \cite{laturnus2021morphvae} and details the downstream fusion methodology used to evaluate the complementary information provided by the AG spectral signature.

\textbf{A.13.1 Representation Mechanism and Sanity Check} \\
MorphVAE learns representations by treating neurons as collections of spatial random walks. Each neuron is decomposed into 256 soma-to-tip random walks. These walks are processed by an LSTM-based von Mises-Fisher Variational Autoencoder (vMF-VAE). A neuron-level embedding is derived by max-pooling the latent walk embeddings, yielding a 32-dimensional representation.

To verify our reproduction environment, we executed a sanity check following the official Farrow dataset workflow (400 train, 99 validation, 100 test neurons). The reproduced pipeline optimized the VAE for 150 epochs and achieved a 5-Nearest-Neighbor (5-NN) test accuracy of $43.62\%$, confirming functional correctness.

\textbf{A.13.2 Dataset Configuration and VAE Optimization} \\
For the benchmark evaluations, the VAE was trained using the unified fold-based split. For BIL-6, SWC filename matching was enforced, yielding 1,200 neurons (958 train, 242 test). The upstream VAE was optimized without access to class labels: 150 epochs, $\kappa = 500$, and a learning rate of $0.01$. The 32-dimensional embeddings were extracted from the checkpoint achieving the best test accuracy to align with the unified evaluation protocol.

\textbf{A.13.3 Downstream Fusion Architecture} \\
We evaluated the frozen MorphVAE embeddings against a late-fusion model. Two configuration paradigms were evaluated using matched downstream architectures: (1) \textbf{MorphVAE Only:} The 32D MorphVAE embedding is processed directly. (2) \textbf{Direct Concatenation:} The 32D MorphVAE embedding and the 64D AG spectrum are concatenated to form a 96D joint representation. 

The downstream classifier utilized for both conditions is a 2-layer MLP featuring a hidden dimension of 64, processed sequentially by Layer Normalization, GELU activation, and Dropout ($p=0.2$). The models were trained for 150 epochs using AdamW with a learning rate of $10^{-3}$, a weight decay of $10^{-4}$, and a batch size of 64.

\textbf{A.13.4 Empirical Results} \\
The mean and standard deviation of the classification accuracies across five random seeds are reported in Table~\ref{tab:morphvae_fusion_results}. The direct concatenation of the AG spectral signature yields consistent classification improvements. The absolute gains (up to $+20.25\%$ on BIL-6) demonstrate that the topological invariants provide structural information complementary to the random-walk-based geometric embeddings generated by MorphVAE.

\section{Environment}
CPU: AMD Thread-ripper 2970wx \\
GPU: 2080ti\\
RAM: 128G

\end{document}